\theoremstyle{plain}
\newtheorem{theorem}{Theorem}[section]
\newtheorem{proposition}[theorem]{Proposition}
\theoremstyle{definition}
\newtheorem{definition}[theorem]{Definition}
\newtheorem{assumption}[theorem]{Assumption}
\theoremstyle{remark}
\DeclarePairedDelimiterX{\infdivx}[2]{(}{)}{#1\;\delimsize\|\;#2}
\newcommand{\rdalpha}[2]{\mathrm{D}_{\alpha}\infdivx*{#1}{#2}}
\DeclareMathOperator*{\argmin}{arg\,min}
\DeclareMathOperator{\tr}{tr}
\newcommand{\mbR}{\mathbb{R}}
\newcommand{\mcL}{\mathcal{L}}
\newcommand{\mbE}{\mathbb{E}}
\newcommand{\Var}{\mathrm{Var}}
\newcommand{\Cov}{\mathrm{Cov}}
\newcommand{\renyi}{R\'enyi}
\icmltitlerunning{Forward Learning with Differential Privacy}
\begin{document}

\twocolumn[
\icmltitle{Forward Learning with Differential Privacy}




\begin{icmlauthorlist}
\icmlauthor{Mingqian Feng}{ur}
\icmlauthor{Zeliang Zhang}{ur}
\icmlauthor{Jinyang Jiang}{bd}
\icmlauthor{Yijie Peng}{bd}
\icmlauthor{Chenliang Xu}{ur}
\end{icmlauthorlist}

\icmlaffiliation{ur}{Univeristy of Rochester}
\icmlaffiliation{bd}{Peking University}

\icmlcorrespondingauthor{Mingqian Feng}{mingqian.feng@rochester.edu}

\icmlkeywords{Machine Learning, ICML}

\vskip 0.3in
]



\printAffiliationsAndNotice{}  

\begin{abstract}
Differential privacy (DP) in deep learning is a critical concern as it ensures the confidentiality of training data while maintaining model utility. Existing DP training algorithms provide privacy guarantees by clipping and then injecting external noise into sample gradients computed by the backpropagation algorithm. Different from backpropagation, forward-learning algorithms based on perturbation inherently add noise during the forward pass and utilize randomness to estimate the gradients. Although these algorithms are non-privatized, the introduction of noise during the forward pass indirectly provides internal randomness protection to the model parameters and their gradients, suggesting the potential for naturally providing differential privacy. In this paper, we propose a privatized forward-learning algorithm, Differential Private Unified Likelihood Ratio (DP-ULR), and demonstrate its differential privacy guarantees.  DP-ULR features a novel batch sampling operation with rejection, of which we provide theoretical analysis in conjunction with classic differential privacy mechanisms. DP-ULR is also underpinned by a theoretically guided privacy controller that dynamically adjusts noise levels to manage privacy costs in each training step. Our experiments indicate that DP-ULR achieves competitive performance compared to traditional DP training algorithms based on backpropagation, maintaining nearly the same privacy loss limits.
\end{abstract}


\begin{figure}[ht]
    \begin{center}
        \centerline{\includegraphics[width=0.80\columnwidth]{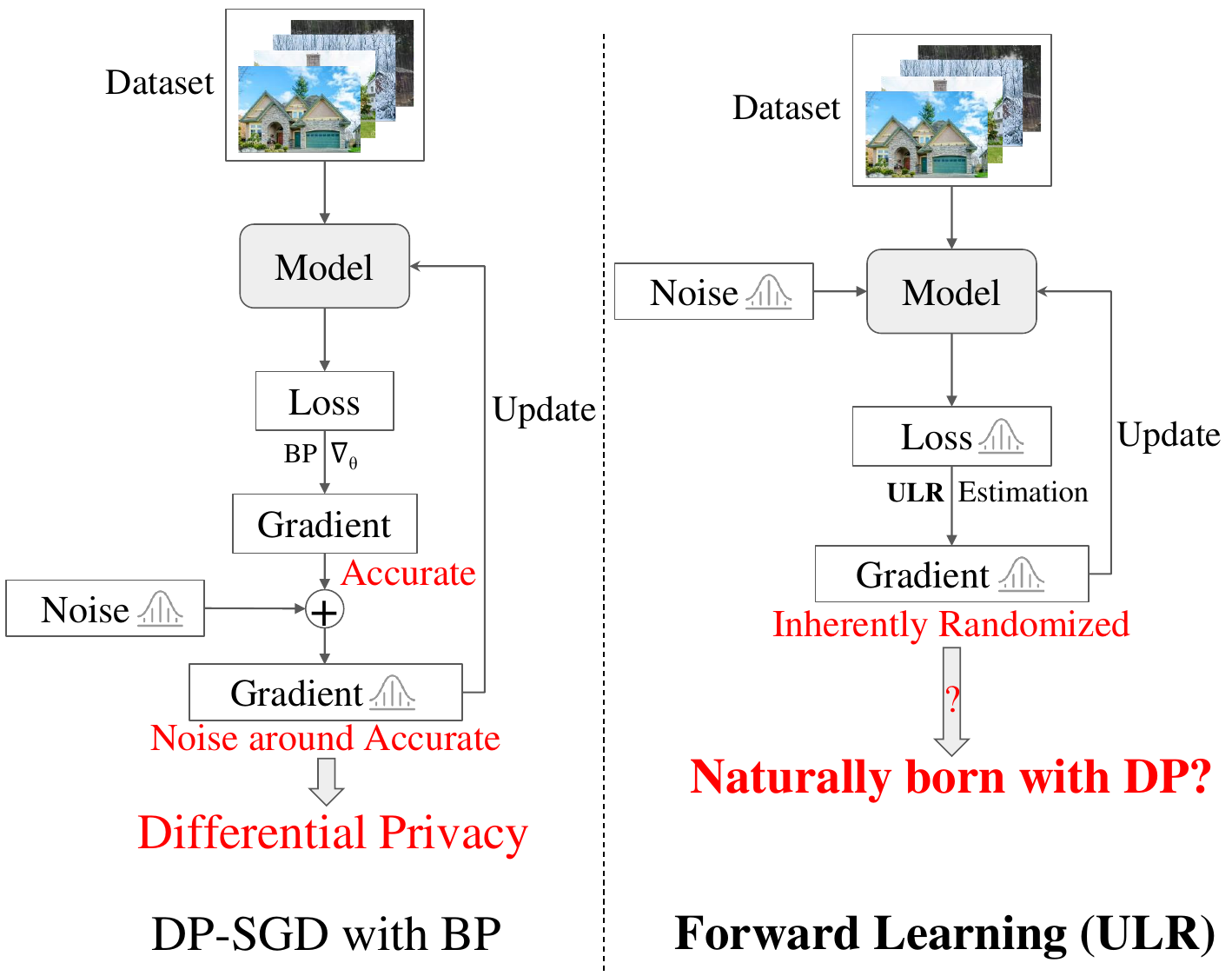}}
        \caption{Compared to traditional training algorithms, forward learning adds noise during the forward pass and estimates naturally randomized gradients, leading to a potential free lunch of differential privacy.}
        \label{fig: motivation}
    \end{center}
    \vskip -1cm
\end{figure}

\section{Introduction}
Deep neural networks have made substantial advancements across various domains, such as image recognition~\citep{meng2022locating,dosovitskiy2020image,zhao2020exploring}, natural language processing~\citep{deng2018deep,meng2022locating,han2021transformer}, and autonomous driving ~\citep{huang2018apolloscape,hauslschmid2017supportingtrust,chen2015deepdriving}. However, training these models often involves vast amounts of data, including personal data gathered from the Internet, exacerbating privacy concerns. It has been well-documented that neural networks do not merely learn from data but can also memorize specific instances~\citep{carlini2019secret,carlini2021extracting}. 

Differential privacy (DP) has emerged as a widely accepted metric for assessing the leakage of sensitive information in data~\citep{liu2024survey}. In the realm of model training, DP mechanisms (algorithms) aim to ensure that the presence or absence of any single data sample does not significantly influence the learned parameters. The most popular learning algorithm, Differentially Private Stochastic Gradient Descent (DP-SGD)~\citep{abadi2016deep}, employs a typical strategy to safeguard privacy: assessing algorithms' sensitivity and introducing randomness by adding random noise to their final output. The manually introduced randomness breaks the deterministic of the computed gradient and protects privacy.  However, there are many problems when deploying DP-SGD. First, it needs to compute the gradient of each sample individually, causing huge time consumption compared to traditional non-private algorithms. Second, it needs the full knowledge of the computational graph due to backpropagation, while any insertion of black-box modules in the pipeline would block the use of DP-SGD. Third, it needs all operations to be differentiable, while many advanced models use non-differentiable activations, such as spiking neural networks~\citep{tavanaei2019deep}.


Different from deterministic backpropagation-based methods, forward-learning algorithms~\citep{peng2022new,hinton2022forward,salimans2017evolution} employ perturbation or Monte Carlo simulations to estimate the gradient directly, bypassing the need for backpropagation based on the chain rule. Compared to backpropagation-based methods, forward-learning algorithms offer several advantages, including high parallelizability, suitability for non-differentiable modules, and applicability in black-box settings~\citep{jiang2023one}. Moreover, as depicted in \Cref{fig: motivation}, perturbation during the forward pass naturally breaks the deterministic optimizations and results in randomized converged parameters, providing a potential "free lunch" for equipping the model with DP. Consequently, an intuitive question arises: \textit{How could we utilize the inherent randomness in forward-learning algorithms to achieve differential privacy?}

To answer this question, we investigate the state-of-the-art forward-learning algorithm, the Unified Likelihood Ratio (ULR) method~\citep{jiang2023one}. The ULR algorithm adds noise to intermediate values, e.g., each layer logit, during the forward propagation and utilizes theoretical tools to estimate the parameter gradients. While ULR inherently provides randomized gradients, there is still a gap in fully achieving differentially private learning.

In this paper, to address this gap, we propose a privatized forward-learning algorithm, Differentially Private Unified Likelihood Ratio (DP-ULR), and provide a theoretical analysis of its differential privacy guarantees. DP-ULR distinguishes itself from ULR and achieves DP by incorporating novel elements, including the sampling-with-rejection strategy and the theoretically guided differential privacy controller. Although not treating DP-ULR as a drop-in replacement for DP-SGD, our theoretical analysis and experimental findings reveal that DP-ULR demonstrates nearly the same differential privacy properties and competitive utility in practice compared to DP-SGD. Our contributions are summarized as follows:
\begin{itemize}
    \item We propose a novel sampling-with-rejection technique and theoretically analyze its impact on differential privacy in conjunction with the Gaussian mechanism.
    \item We introduce DP-ULR, a forward-learning differential privacy algorithm that integrates our sampling-with-rejection strategy and a well-designed differential privacy cost controller.
    \item We provide a comprehensive theoretical analysis of the differential privacy guarantees of our DP-ULR algorithm, establishing a general DP bound under typical conditions. 
    \item We validate the effectiveness of our algorithm with MLP and CNN models on the MNIST and CIFAR-10 datasets.
\end{itemize}

\section{Background and Related Work}
\subsection{Differential Privacy}
Differential privacy~\citep{Dwork-ICALP, DMNS, ODO} is the gold standard for data privacy in controlling the disclosure of individual information. It is formally defined as the following:

\begin{definition}[$(\epsilon, \delta)$-DP~\citep{ODO}] \label{def:dp}
    A randomized mechanism $\mathcal{M}\colon \mathcal{D} \rightarrow \mathcal{R}$ with domain $\mathcal{D}$ and range $\mathcal{R}$, satisfies $(\epsilon, \delta)$-differential privacy if for any adjacent inputs $D,D' \in \mathcal{D}$ and any subset of outputs $S \subseteq \mathcal{R}$ it holds that
    \begin{equation}
        \setlength\abovedisplayskip{3pt}
        \setlength\belowdisplayskip{3pt}
        \text{Pr}[\mathcal{M}(D)\in S] \le e^\epsilon\text{Pr}[\mathcal{M}(D')\in S] + \delta.
    \end{equation}
\end{definition}

The traditional privacy analysis of existing learning algorithms is obtained through \renyi\ Differential Privacy~\citep{RDP, mironov2019renyi}, which is defined with \renyi\ divergence. 
\begin{definition}[\renyi\ divergence~\citep{RDP, mironov2019renyi}] \label{def:renyi_div}
    Let $P$ and $Q$ be two distributions (random variables) defined over the same probability space, and let $p$ and $q$ be their respective probability density functions. The \renyi\ divergence of a finite order $\alpha\neq 1$ between $P$ and $Q$ is defined as
    \begin{equation}
        \setlength\abovedisplayskip{3pt}
        \setlength\belowdisplayskip{3pt}
        \rdalpha{P}{Q} \coloneqq \frac1{\alpha-1}\ln \mbE_{x\sim q} \left(\frac{p(x)}{q(x)}\right)^{\alpha}
    \end{equation}
    \renyi\ divergence at $\alpha=1,\infty$ are defined by continuity.
\end{definition}



\begin{definition}[\renyi\ differential privacy (RDP)~\citep{RDP, mironov2019renyi}] \label{def:rdp}
    We say that a randomized mechanism $\mathcal{M}\colon \mathcal{D}\to\mathcal{R}$ satisfies $(\alpha,\gamma)$-\renyi\ differential privacy (RDP) if for any two adjacent inputs $D, D' \in \mathcal{D}$ it holds that
    \begin{equation}
        \setlength\abovedisplayskip{3pt}
        \setlength\belowdisplayskip{3pt}
        \rdalpha{\mathcal{M}(D)}{\mathcal{M}(D')} \le \gamma.
    \end{equation}
\end{definition}
In this work, we use RDP to track privacy because of its outstanding composition property. Specifically, a sequence of $(\alpha, \gamma_i)$-RDP algorithms satisfies an additive RDP with $(\alpha, \sum_i \gamma_i)$. Moreover, the following proposition serves as a tool to transform the $(\alpha,\gamma)$-RDP to traditional $(\epsilon,\delta)$-DP. 
\begin{proposition}[From $(\alpha, \gamma)$-RDP to $(\epsilon, \delta)$-DP~\citep{RDP}] \label{prop:rdp-to-dp}
    If $f$ is an $(\alpha, \gamma)$-RDP mechanism, it also satisfies $(\gamma+\frac{\ln 1/\delta}{\alpha-1},\delta)$-differential privacy for any $0<\delta<1$, or equivalently $(\epsilon,\exp{[(\alpha-1)(\gamma-\epsilon)]})$-differential privacy for any $\epsilon>\gamma$.
\end{proposition}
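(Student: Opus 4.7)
The plan is to follow the standard change-of-measure plus Markov argument. Fix adjacent inputs $D, D'$ and write $P = \mathcal{M}(D)$, $Q = \mathcal{M}(D')$ with densities $p, q$. Define the privacy-loss random variable $L(x) = \ln(p(x)/q(x))$. The whole proof hinges on the identity
\begin{equation*}
\mbE_{x\sim p}\!\left[e^{(\alpha-1)L(x)}\right] = \mbE_{x\sim p}\!\left[\left(\tfrac{p(x)}{q(x)}\right)^{\alpha-1}\right] = \mbE_{x\sim q}\!\left[\left(\tfrac{p(x)}{q(x)}\right)^{\alpha}\right],
\end{equation*}
which converts an integral against $p$ into one against $q$ so that the RDP hypothesis $\rdalpha{P}{Q}\le\gamma$ can be invoked; this gives the upper bound $e^{(\alpha-1)\gamma}$ on the above moment.

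Next I would apply Markov's inequality to the nonnegative random variable $e^{(\alpha-1)L}$ under $P$ to obtain, for any threshold $\epsilon>\gamma$,
\begin{equation*}
\Pr_{x\sim p}[L(x) > \epsilon] \;\le\; \frac{e^{(\alpha-1)\gamma}}{e^{(\alpha-1)\epsilon}} \;=\; e^{-(\alpha-1)(\epsilon-\gamma)}.
\end{equation*}
This is the key tail bound on the "bad" event where the likelihood ratio blows up.

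To finish, for an arbitrary measurable $S\subseteq\mathcal{R}$, split $S$ along the bad event $B = \{x : L(x) > \epsilon\}$:
\begin{equation*}
\Pr[\mathcal{M}(D)\in S] \le \Pr_{x\sim p}[x\in B] + \Pr_{x\sim p}[x\in S\setminus B].
\end{equation*}
On $S\setminus B$ we have $p(x)\le e^\epsilon q(x)$ pointwise, so the second term is at most $e^\epsilon \Pr[\mathcal{M}(D')\in S]$. Combining with the Markov bound yields $\Pr[\mathcal{M}(D)\in S] \le e^\epsilon \Pr[\mathcal{M}(D')\in S] + e^{-(\alpha-1)(\epsilon-\gamma)}$, which is exactly the second ($(\epsilon,\delta)$ with $\delta = \exp[(\alpha-1)(\gamma-\epsilon)]$) form in the statement. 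Setting this $\delta$ equal to a prescribed value and solving for $\epsilon$ gives $\epsilon = \gamma + \ln(1/\delta)/(\alpha-1)$, recovering the first form.

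There is no serious obstacle here; the only subtlety is the change-of-measure step that turns an $\alpha$-moment under $Q$ into an $(\alpha-1)$-moment under $P$, so that Markov's inequality under $P$ yields a bound on the event $\{L>\epsilon\}$ and the density-ratio bound on its complement gives the $e^\epsilon$ factor against $Q$. Measure-zero issues (points where $q=0$) can be handled by the standard convention that the ratio $p/q$ is $+\infty$ there and contributes to $B$, which is consistent with \Cref{def:renyi_div}; the $\alpha=1,\infty$ boundary cases are covered by the continuity convention already stipulated.
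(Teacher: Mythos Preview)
Your argument is correct and is precisely the standard change-of-measure plus Markov proof from Mironov's original RDP paper. The present paper does not supply its own proof of this proposition; it is stated as a cited result from~\cite{RDP}, so there is nothing further to compare.
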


\subsection{Differential Privacy in Deep Learning}
As an adaption of Stochastic Gradient Descent (SGD) with backpropagation, DP-SGD~\citep{abadi2016deep} is the most popular DP algorithm for deep learning~\citep{de2022unlocking,sander2023tan}. It assesses sensitivity by clipping the per-sample gradients and adds Gaussian noise after gradient computation to provide differential privacy guarantees. Particularly, as a training algorithm that comprises a sequence of adaptive mechanisms---a common scenario in deep learning---DP-SGD adds noise to the outcome of each sub-mechanism calibrated to its sensitivity, enhancing the utility of final learned models. While several techniques to improve the utility-privacy trade-off have been employed, including over-parameterized model~\citep{de2022unlocking}, mega-batch training~\citep{9596307,sander2023tan}, averaging per-sample gradients across multiple augmentations~\citep{hoffer2020augment}, temporal parameter averaging~\citep{doi:10.1137/0330046}, equivariant networks~\citep{hölzl2023equivariant}, these adaptations not only heavily increase the computation cost but also do not change the core of DP-SGD: adding noise to deterministic gradients, which does not stand in forward learning.


\subsection{Forward learning}

While there is no evidence that backpropagation exists in natural intelligence~\citep{lillicrap20backpropagation}, some studies put efforts into designing biologically plausible forward-only learning algorithms. For example, \citet{nokland2016direct} employs the direct feedback alignment to train hidden layers independently. \citet{jacot2018neural} leverage a neural target kernel to approximate the gradient for optimization. \citet{salimans2017evolution} apply the evolutional strategy to update the neural network parameters. \citet{hinton2022forward} replace the forward-backward pass with two forwards and optimize the neural networks by optimizing the local loss functions on positive and negative samples.  \citet{peng2022new} propose a likelihood ratio (LR) method for unbiased gradient estimation with only one forward in the multi-layer perception training and \citet{jiang2023one} develop the unified likelihood ratio (ULR) method for training a wide range of deep learning models.  In our work, we incorporate novel elements into ULR and provide a theoretical-guaranteed privatized forward-learning algorithm, \textit{i.e.}, DP-ULR, to achieve differential privacy. We note that while several existing works \citep{liu2024differentiallyprivatezerothordermethods,zhang2024dpzeroprivatefinetuninglanguage,tang2024privatefinetuninglargelanguage} privatize loss values obtained in zeroth-order optimization for achieving DP, our work differs from them in multiple aspects, including motivation, application scope, the core algorithm, and theoretical analysis. Detailed discussion is provided in \Cref{apdx:cmpr_dp_zero}.

\section{Methodology}
In \Cref{sec: prelimnaries}, we present preliminaries of differential privacy in the deep learning setting. In \Cref{sec:dp_ulr_alg}, we introduce our proposed privatized forward-learning algorithm, DP-ULR. In \Cref{sec:dp_dp_ulr}, we introduce the core Gaussian mechanism involved in DP-ULR and provide our theoretical results of the DP bound for both of them. In \Cref{sec:dp_ulr_vs_dp_sgd}, we discuss the difference between DP-ULR and DP-SGD. 

\subsection{Preliminaries} \label{sec: prelimnaries}
In this paper, we consider the deep learning setting. Specifically, assume we have a (training) dataset $D=\{d_i\}_1^N$, where each example $d=(x,y) \in \mathcal{X} \times \mathcal{Y}$ is a pair of the input and the corresponding label. Given a non-parameter model structure $\varphi$ and a loss function $\ell$, the goal is to optimize the parameter $\theta$ to minimize the empirical loss, formalized as $\argmin_\theta (\sum_{(x,y) \in D} \ell(\varphi(x;\theta), y))$.
Intuitively, the final output $\theta$ carries information from all examples as they all contribute to this optimization. In practice, deep-learning models do easily memorize sensitive, personal, or private data. For evaluating privacy in deep learning, differential privacy has become a significant criterion.

In the context of deep learning with differential privacy, a mechanism $\mathcal{M}$ refers to a training algorithm that takes a (training) dataset $D$, typically large, as the input and outputs a final parameter $\theta$. Thus, we consider the domain $\mathcal{D}=\{D \in 2^{\Bar{D}} \mid \vert D \vert \ge \Bar{N} \}$, where $\Bar{N}$ is a positive integer and $\Bar{D}$ is a large data pool, and the range $\mathcal{R} = \mbR^{d_\theta}$, where $d_\theta$ is the number of dimensions of the model parameter. Then, the adjacent inputs represent two training datasets $D, D' \in \mathcal{D}$ that differ by exactly one example. To guarantee privacy, we expect a randomized training algorithm $\mathcal{M}$ to produce effectively close final parameter distributions in terms of $(\epsilon, \delta)$-DP or $(\alpha, \gamma)$-RDP (\Cref{def:dp} and \ref{def:rdp}). For clarity, a complete list of symbols can be found in \Cref{apdx:list_of_symbols}.

\subsection{DP-ULR algorithm} \label{sec:dp_ulr_alg}

\begin{algorithm*}[h]  
  \caption{Differential Private Unified Likelihood Ratio Method (DP-ULR)}  
  \label{alg: DP_LR}
  \begin{algorithmic}  
    \STATE {\bfseries Input:}
        Dataset $D = \{(x_i, y_i)\}_1^N$, loss function $\ell$, model structure  $\varphi$. Parameters: learning rate $\eta_t$, target std $\sigma_0$, sampling rate $q$, rejection threshold $N_B$, repeat time $K$, overall clip bound $C$. \\
    \textbf{Initialize} $\theta_0$ randomly
    \FOR{$t \in [T]$}
        \STATE Take a random sample $B_t$ from $D$ with sampling probability $N_B/N$, resample if $\vert B_t \vert < N_B$
        \STATE //  \textit{Estimate gradients}
        \FOR{$l \in L$}
            \STATE Compute required noise std $\sigma$ (\cref{eq:sigma}) and accumulate privacy cost using privacy controller
            \FOR{$d_i = (x_i,y_i) \in B_t$}
                \STATE Sample $K$ zero-mean Gaussian noise $\{z_k\}_1^K \overset{\mathrm{iid}}{\sim} \mathcal{N}(0, \sigma^2 \mathbb{I})$ 
                \STATE Add noise to the $l$-th module's output $x_{i,k}^{l+1} = v_i^l + z_k = \varphi^l(x_i^l,\theta_t^l) + z_k$, $k=1,...,K$
                \STATE Forward to compute loss $\mcL_k = \ell(\varphi(x_i; \theta, l, z_k), y_i)$, $k=1,...,K$
                \STATE Compute $\hat{g}_{t,k}^l(d_i) \leftarrow \frac{1}{\sigma^2}D_{\theta^l}^\top v_i^l \cdot z_k\mcL_k$, $k=1,...,K$ $~\quad\rhd$ $D_{\theta^l}^\top v_i^l$ is the Jacobian matrix 
                \STATE Compute $g_t^l(d_i) \leftarrow \frac{1}{K}\sum_k \hat{g}_{t,k}^l(d_i)$
                \STATE Clip gradient $g_t^l(d_i) \leftarrow g_t^l(d_i)/ \text{max}(1, \frac{\Vert g_t^l(d_i) \Vert_2}{C})$
            \ENDFOR
        \ENDFOR
        \STATE  //  \textit{Gradient descent}
        \STATE For each layer $l$, $\theta_{t+1}^l \leftarrow \theta_t^l - \frac{\eta_t}{N_B}\sum_{d_i \in B_t} g_t^l(d_i)$
    \ENDFOR
    \STATE {\bfseries Output:}
        $\theta_T = (\theta_T^1,...,\theta_T^L)$ and the overall privacy cost
  \end{algorithmic}
\end{algorithm*}

Consider a model with a hierarchical non-parameter structure $\varphi$ that can be sliced into $L$ modules. Let $\varphi^l$ denote the $l$-th module and $\theta^l$ denote the parameter of $\varphi^l$. We write $x^l$ for the $l$-th module's input and $v^l$ for the output, i.e., $v^l \coloneqq \varphi^l(x^l;\theta^l)$. The outline of our DP-ULR training algorithm is depicted in \Cref{alg: DP_LR}. Initially, in each step $t\in [T]$, we take an independent random sample from the dataset $D$ with equal sampling probability for each example. If the size of sampled batch $B_t$ is smaller than a pre-defined hyperparameter $N_B$, it is resampled. Subsequently, during the forward pass of each example $d = (x, y) \in B_t$, we inject Gaussian noise $z$ into each module's output $v^l$ separately. This noise-added output serves as the next module's input, i.e., $x^{l+1} = v^l +  z$. Then, we compute the likelihood ratio gradient proxy $\hat{g}_t^l(d)$, which we define later in \Cref{eq:lr_proxy}. For each example $d$, we repeat $K$ times and clip the $\ell_2$ norm of averaged proxies to form the final estimated gradient $g_t^l(d)$. Finally, we employ the estimated gradients over the batch to update the parameter of the $l$-th module. During the whole process of training, we utilize a proxy controller method to adjust the standard deviation (std) $\sigma$ of noise given a required lower bound of the proxy's std $\sigma_0$ for the desire of differential privacy and to compute the accumulated privacy cost.

\noindent\textbf{Likelihood ratio proxy}. In our DP-ULR, we harness the likelihood ratio gradient proxy $\hat{g}^l(d)$ to approximate the ground-truth gradient for each example instead of accurately computing it by backpropagation. Let $\varphi(x;\theta,l,z)$ denote the model's output when the Gaussian noise $z \sim \mathcal{N}(0, \sigma^2 \mathbb{I})$ is added to $v^l$.  Then, the likelihood ratio gradient proxy before clipping is defined as 
\begin{equation} \label{eq:lr_proxy}
    \setlength\abovedisplayskip{3pt}
    \setlength\belowdisplayskip{3pt}
    \hat{g}^l(d) = \frac{1}{\sigma^2}(D_{\theta^l}v^l)^\top \cdot z\mcL,
\end{equation}
where $D_{\theta^l} v^l$ is the Jacobian matrix of $v^l$ with respect to $\theta^l$ and $\mcL \coloneqq \ell(\varphi(x; \theta, l, z), y)$ represents the final noisy loss. We have the following propositions for gradient proxy. The detailed discussion can be found in \Cref{apdx:dist_est_grad}.

\begin{proposition} \label{proo:proxy_mean}
    As the standard deviation $\sigma$ of noise approaches zero, the expectation of $\hat{g}^l(d)$ converges to the true gradient without noise, i.e.,
    \begin{equation}
        \setlength\abovedisplayskip{3pt}
        \setlength\belowdisplayskip{3pt}
        \lim_{\sigma \downarrow 0}\mbE_z(\hat{g}^l(d)) = \nabla_{\theta^l}\ell(\varphi(x; \theta), y).
    \end{equation}
\end{proposition}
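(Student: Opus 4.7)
The natural route is the standard score-function (Stein's lemma) argument. Recall that the density $p_\sigma$ of $z\sim\mathcal{N}(0,\sigma^2\mathbb{I})$ satisfies $\nabla_z p_\sigma(z)=-\sigma^{-2}z\,p_\sigma(z)$, so, viewing $\mcL$ as a function of $z$ alone (with $x,\theta,l$ fixed), integration by parts gives
\begin{equation}
\mbE_z\!\left[\tfrac{1}{\sigma^2} z\,\mcL\right]
\;=\;-\!\int \mcL(z)\,\nabla_z p_\sigma(z)\,dz
\;=\;\mbE_z\!\left[\nabla_z \mcL\right],
\end{equation}
provided the boundary terms vanish, which holds whenever $\mcL$ and its first derivative grow at most polynomially in $\|z\|$ (true for typical losses composed with $C^1$ networks). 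Pulling the deterministic factor $(D_{\theta^l}v^l)^\top$ outside the expectation then yields
\begin{equation}
\mbE_z[\hat{g}^l(d)] \;=\; (D_{\theta^l}v^l)^\top\,\mbE_z[\nabla_z\mcL].
\end{equation}

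Next I would compute $\nabla_z\mcL$ explicitly. Because $z$ enters the forward pass only through $x^{l+1}=v^l+z$, the chain rule gives $\nabla_z\mcL(z)=\nabla_{x^{l+1}}\ell(\varphi(x;\theta,l,z),y)$. As $\sigma\downarrow 0$, $z$ concentrates at $0$ in distribution; by continuity of the gradient and dominated convergence (using any polynomial bound on $\|\nabla_z\mcL\|$ together with the uniform integrability of Gaussians as $\sigma\downarrow 0$),
\begin{equation}
\lim_{\sigma\downarrow 0}\mbE_z[\nabla_z\mcL]
\;=\; \nabla_{v^l}\ell(\varphi(x;\theta),y).
\end{equation}
Finally, the usual chain rule identity $\nabla_{\theta^l}\ell(\varphi(x;\theta),y)=(D_{\theta^l}v^l)^\top \nabla_{v^l}\ell(\varphi(x;\theta),y)$ combines with the two displays above to give the claimed limit.

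The two technical hurdles are purely regularity bookkeeping. The first is justifying the Stein integration-by-parts step: one must argue that $\mcL(z)p_\sigma(z)\to 0$ at infinity, which follows once $\mcL$ has, say, at most polynomial growth (a mild assumption on $\ell$ and on the Lipschitz/smoothness of the later modules). The second is swapping the limit with the expectation, for which dominated convergence suffices under the same growth condition on $\nabla_z\mcL$, since $\{p_\sigma\}_{\sigma\le 1}$ can be dominated by a fixed Gaussian tail. I would state these as a standing smoothness assumption on $\varphi^{l+1},\dots,\varphi^L$ and $\ell$, and then the rest of the proof is a one-line chain-rule manipulation. No new machinery beyond the score-function identity and elementary calculus is needed.
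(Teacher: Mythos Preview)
Your argument is correct and is essentially the paper's approach: both rest on the score-function (likelihood-ratio) identity, which the paper obtains by invoking the general LR theorem of Jiang et al.\ (restated in their appendix) to conclude $\mbE_z[\hat g^l(d)]=\nabla_{\theta^l}\mbE_z[\mcL]$ before sending $\sigma\downarrow 0$, while you obtain the same identity directly via Stein/integration by parts as $\mbE_z[\sigma^{-2}z\mcL]=\mbE_z[\nabla_z\mcL]$ and then pre-multiply by the deterministic Jacobian. The only cosmetic differences are that the paper folds the Jacobian $(D_{\theta^l}v^l)^\top$ into the LR theorem rather than applying the chain rule at the end, and its standing regularity hypothesis is an $L^1$ integrability condition on $|\mbE[\mcL\mid\xi,x^l]|\cdot\sup_{\theta^l}|\nabla_{\theta^l}g^l(\xi)|$ rather than your polynomial-growth bound.
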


\begin{proposition} \label{prop:proxy_var}
    Given the loss without injected noise, $\mcL_0 \coloneqq l(\varphi(x;\theta),y)$, and a small $\sigma$, we have
    \begin{equation}
        \setlength\abovedisplayskip{3pt}
        \setlength\belowdisplayskip{3pt}
        \Var(\hat{g}^l(d)) \approx \frac{\mcL_0^2}{\sigma^2} (D_{\theta^l}v^l)^\top \cdot D_{\theta^l} v^l.
    \end{equation}
\end{proposition}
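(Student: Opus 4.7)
The plan is to interpret $\Var(\hat{g}^l(d))$ as the $d_{\theta^l}\times d_{\theta^l}$ covariance matrix, as demanded by the matrix form on the right-hand side, and to carry out a leading-order expansion in the noise scale $\sigma$. Writing $J \coloneqq D_{\theta^l} v^l$ and decomposing
$$\Var(\hat{g}^l(d)) = \mbE[\hat{g}^l(d)\hat{g}^l(d)^\top] - \mbE[\hat{g}^l(d)]\mbE[\hat{g}^l(d)]^\top,$$
I would Taylor-expand the noisy loss about $z=0$ as $\mcL = \mcL_0 + g_v^\top z + O(\|z\|^2)$, where $g_v \coloneqq \nabla_{v^l}\ell(\varphi(x;\theta,l,z),y)\big|_{z=0}$, and propagate this expansion through the two moments separately.

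For the second moment, since $\hat{g}^l(d) = \sigma^{-2} J^\top z \mcL$ gives $\mbE[\hat{g}^l(d)\hat{g}^l(d)^\top] = \sigma^{-4} J^\top \mbE[\mcL^2 z z^\top] J$, I would square the expansion to obtain $\mcL^2 = \mcL_0^2 + 2\mcL_0 g_v^\top z + O(\|z\|^2)$ and then use two Gaussian moment facts: $\mbE[zz^\top] = \sigma^2 \mathbb{I}$, and for any constant vector $a$ one has $\mbE[(a^\top z) zz^\top] = 0$ because third moments of a centered Gaussian vanish. The surviving leading term is $\mcL_0^2 \sigma^2 \mathbb{I}$, with an $O(\sigma^4)$ correction coming from the $O(\|z\|^2)$ remainder (controlled via Isserlis' formula). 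Dividing by $\sigma^4$ yields
$$\mbE[\hat{g}^l(d)\hat{g}^l(d)^\top] = \frac{\mcL_0^2}{\sigma^2} J^\top J + O(1).$$

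For the mean, $\mbE[\hat{g}^l(d)] = \sigma^{-2} J^\top \mbE[z \mcL]$, and only the $g_v^\top z$ term in the expansion contributes at leading order, giving $\mbE[z\mcL] = \sigma^2 g_v + O(\sigma^2)$, so $\mbE[\hat{g}^l(d)] = J^\top g_v + O(1)$---an $O(1)$ quantity that also recovers the consistency statement of \Cref{proo:proxy_mean} as a byproduct. Consequently $\mbE[\hat{g}^l(d)]\mbE[\hat{g}^l(d)]^\top = O(1)$ is strictly subleading relative to the $\sigma^{-2}$-scaled second moment, so subtracting produces the claimed leading-order equivalence.

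The main obstacle is not the algebra but the regime in which the $\approx$ holds: the argument is a formal leading-order expansion, so I would explicitly flag the smoothness hypothesis on $\ell\circ\varphi$ (at least $C^2$ in $v^l$ with a locally bounded Hessian) that is required to control the Taylor remainder uniformly in $z$ under Gaussian expectation. The key structural observation worth highlighting is the asymmetric $\sigma$-scaling induced by the likelihood-ratio weighting $\sigma^{-2}$: the second moment is inflated to order $\sigma^{-2}$ while the squared mean remains $O(1)$, so the covariance is dominated by the symmetric cross term $\sigma^{-2}\mcL_0^2 J^\top J$. The vanishing of the Gaussian third moment is equally essential---without it, the $2\mcL_0 g_v^\top z$ cross term in $\mcL^2$ would contribute at order $\sigma^{-2}$ and spoil the clean form of the stated approximation.
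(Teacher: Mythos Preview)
Your proposal is correct and follows essentially the same approach as the paper: Taylor-expand $\mcL$ about $z=0$, use the decomposition $\Var=\mbE[XX^\top]-\mbE[X]\mbE[X]^\top$, and exploit the vanishing of odd Gaussian moments together with $\mbE[zz^\top]=\sigma^2\mathbb{I}$ to isolate the $\sigma^{-2}\mcL_0^2$ leading term. The only cosmetic difference is that the paper first computes $\Var(\sigma^{-2}z\mcL)\approx\sigma^{-2}\mcL_0^2\,\mathbb{I}$ and then sandwiches by $J^\top(\cdot)J$, whereas you carry $J$ through from the start; your explicit remarks on the $C^2$ smoothness needed to control the remainder and on the role of the vanishing third moment are welcome additions that the paper leaves implicit.
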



\noindent\textbf{Batch subsampling with rejection}. A significant difference between our proposed DP-ULR and the previous likelihood ratio methods is the subsampling operation. In DP-ULR, we adopt an independent sampling strategy with a predefined threshold $N_B$. Concretely, each example in the dataset $d_i\in D$ is picked independently with the same probability $q$. But if the size of $B_t$ is smaller than $N_B$, it is rejected and resampled. Like the ordinary i.i.d subsampling, our rejection strategy with a lower limit also amplifies privacy. Specifically, in the subsampling with rejection, we expect that the privacy cost $\gamma$ diminishes quadratically with the subsampling rate but adds a very small term independent to $\alpha$ in $(\alpha, \gamma)$-RDP. We discuss this in detail in \Cref{sec:dp_dp_ulr}.

In the implementation, batches are constructed by randomly permuting examples and then partitioning them into groups of fixed size for efficiency. 
For ease of analysis, however, we assume that each batch is formed by independently picking each example with the same probability $q$ and with rejection. 

The intuition behind rejection is that, unlike traditional DP algorithms that add noise to the gradient with fixed variance independent of batch size, the variance of the gradient estimated by our method is directly and positively correlated with batch size. By rejecting small batches, we prevent the privacy costs of low randomness in extreme cases. Besides, for some deep neural networks, the dimensions of the $l$-th module's parameter can be less than the dimensions of $l$-th module's output, i.e., $d_{\theta^l} > d_{v^l}$. Consequently, the Jacobian matrix $(D_{\theta^l}v^l)^\top$ would be a singular transformation of the high-dimensional random variable $z\mcL$. Then, the gradient proxy's covariance matrix must not be full-rank. Rank-deficient covariance is a dangerous signal in DP because it means that the randomness is totally lost along certain directions in the high-dimensional space. In the quest to address this crisis, we introduce the following assumption.

\begin{assumption}\label{asp:full_rank}
    There exists a positive integer $N_0$ less than $\Bar{N}$, such that the sum of the covariance matrices of gradient proxies for any module and any batch with a size larger than $N_0$ sampled from any dataset is full rank. It is equivalent as follows, where we define $\Sigma_{\hat{g}(d)} \coloneqq \Var(\hat{g}^l(d))$,
    \begin{align*}
        \exists N_0 \in [\Bar{N}], \text{s.t.} &\forall D \in \mathcal{D}, \forall B \subset D, |B| \ge N_0, \forall l \in [L], \\
        &\text{rank}(\sum_{d \in B}\Sigma_{\hat{g}(d)})=d_{\theta^l}.
    \end{align*}
    \vspace{-7mm}
\end{assumption}

\Cref{asp:full_rank} indicates that setting the lower limit $N_B$ large enough provides a minimum guarantee of the output's randomness, enabling us to bound the privacy cost by a privacy controller. In \Cref{apdx:diss_of_asmp1}, we discuss when we expect \Cref{asp:full_rank} to hold.

\noindent\textbf{Privacy controller}. In our DP-ULR, we adopt a privacy-controlling method to guarantee the differential privacy cost for each step and, thus, the overall cost. The objective is to bound the minimum variance of the estimated gradient in each step. Let $\Sigma_{B}$ denote the covariance matrix of the estimated gradient for the sampled batch $B$ and $\boldsymbol{\lambda}(\cdot)$ denote the spectrum of a matrix, i.e., the set of its eigenvalues. Note that we introduce \Cref{asp:full_rank} to ensure a full-rank covariance matrix if we set $N_B\ge N_0$. Equivalently, we have $\min(\boldsymbol{\lambda}(\Sigma_{B}))>0$. Then, \Cref{prop:proxy_var} shows the feasibility of controlling $\min(\boldsymbol{\lambda}(\Sigma_{B}))$ by adjusting the std $\sigma$ of injected noise. Meanwhile, the adjustment must be dynamic because of the example-specific Jacobian matrix and non-noise loss.

Concretely, before computing likelihood ratio proxies in each step, we first execute one forward pass without any noise to obtain the Jacobian matrix $D_{\theta^l} v^l$ and the non-noisy loss $\mcL_0$ for each example. Subsequently, we compute the standard covariance matrix, $\Tilde{\Sigma}_{\hat{g}(d)} \coloneqq \mcL_0^2 (D_{\theta^l}v^l)^\top \cdot D_{\theta^l} v^l$, in the batch and the summation's minimum eigenvalue. Finally, we select suitable noise std $\sigma$ to bound the minimum eigenvalue of the covariance matrix of the estimated gradient. Mathematically, it requires
\begin{equation} \label{eq:sigma}
    \setlength\abovedisplayskip{3pt}
    \setlength\belowdisplayskip{3pt}
    \sigma^2 \le \frac{\min(\boldsymbol{\lambda}(\sum_{d\in B_t} \Tilde{\Sigma}_{\hat{g}(d)}))}{KC^2\sigma_0^2},
\end{equation}
where $\sigma_0$ is a predefined target std of estimated gradients. In the pseudocode of \Cref{alg: DP_LR}, parameters are set as a constant. However, the independence of layers and steps allows for setting different target std scales $\sigma_0$, repeat time $K$, clipping thresholds $C$, and rejection thresholds $N_B$. For ease of the following analysis of differential privacy, we assume constant parameters at all times.

\noindent\textbf{Generalization of DP-ULR}. Our theoretical analysis focuses on the most general case of DP-ULR, highlighting its robustness and versatility. The DP-ULR method is highly adaptable; by considering different definitions of modules, we can adjust where noise is added, resulting in different variants of DP-ULR. Our theoretical framework generalizes well to these special cases. For instance, consider a virtual linear with the input of an identical matrix and the weight of the model parameters. Then, adding noise to the logit of this virtual linear layer equals adding noise to the model parameters directly, and the Jacobian matrix would be the identity matrix $\mathbb{I}$, ensuring the full rank. 

\noindent\textbf{Remediation for violation of \Cref{asp:full_rank}}.
Changing where noise is added offers a remediation method if \Cref{asp:full_rank} is not satisfied under the standard module definition. Then, we can still ensure the privacy cost is controlled, albeit with some trade-off in network learning utility. Alternatively, extra independent noise can be added to the estimated gradient directly along its eigenvector directions, compensating for randomness deficiencies. We provide more details in \Cref{apdx:diss_of_asmp1}.

\subsection{Differential privacy of DP-ULR} \label{sec:dp_dp_ulr}
In this section, we provide a theoretical analysis of our DP-ULR's differential privacy. Let's first consider our sampling operation with rejection in a typical Gaussian mechanism.
\begin{definition}[Sampled with Rejection Gaussian Mechanism (SRGM)] 
    Let $\mathcal{D}$ be a set of datasets. Assume datasets in $\mathcal{D}$ has a minimum size, i.e., $\exists \Bar{N}>1$, s.t. $\forall D \in \mathcal{D}$, $\vert D \vert > \Bar{N}$. Let $f$ be a function mapping subsets of datasets in $\mathcal{D}$ to $\mathbb{R}^d$.  We define the Sampled with Rejection Gaussian mechanism (SRGM) parameterized with the sampling rate $0<q\le 1$, the noise $\sigma>0$, and the lower limit $ 1 \le N_B \le \Bar{N}$  as
    \begin{equation}
        SRG_{q,\sigma, N_B}(D) \coloneqq f(\Bar{D})+\mathcal{N}(0,\sigma^2\mathbb{I}^d),
    \end{equation}
    where each element of $D$ is sampled independently with probability $q$ without replacement to form $\Bar{D}$, $\Bar{D}$ is rejected and resampled if $\vert \Bar{D} \vert < N_B$, and $\mathcal{N}(0,\sigma^2\mathbb{I}^d)$ is multivariate Gaussian noise with per-coordinate variance $\sigma^2$.
\end{definition}

\begin{figure*}[t]
    \centering
    \includegraphics[trim= 0.3cm 0cm 0.5cm 0cm, width=0.9\linewidth]{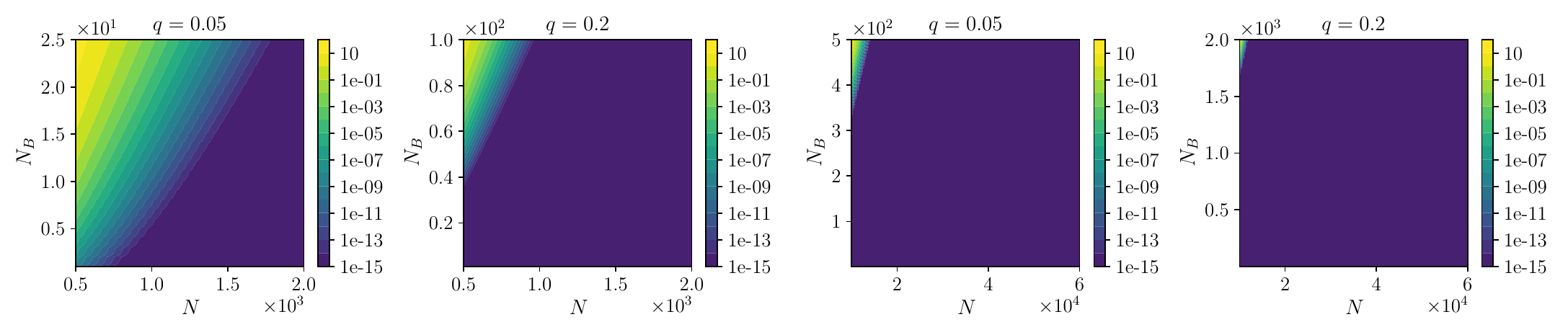}
    \vspace{-0.5cm}
    \caption{Contour plots of the ratio of the first term to the second term in Equation (\ref{eq:dp_cost}).}
    \label{fig:th2_bound}
    \vspace{-4mm}
\end{figure*}

SRGM is similar to the well-studied Sampled Gaussian mechanism (SGM), whose privacy bound has been derived in several settings\citep{mironov2019renyi}. Since with a new parameter, rejection threshold $N_B$, the SRGM requires a lower bound of the dataset size in the domain $\mathcal{S}$ and has a different impact on the differential privacy. Based on previous studies, we introduce the following theorem.

\begin{theorem}\label{thm:srgm_dp}
    If $1 \le N_B \le q\Bar{N}$, $q\le \frac15$, $\sigma\ge 4$, and $\alpha$ satisfy $1<\alpha \le {\textstyle\frac12}\sigma^2A-2\ln \sigma$ and $\alpha \le \frac{\frac12\sigma^2 A^2-\ln5-2\ln\sigma}{A+\ln (q\alpha)+1/(2\sigma^2)}$, where $A\coloneqq\ln\left(1+\frac1{q(\alpha-1)}\right)$, then SRGM applied to a function of $\ell_2$-sensitivity 1 satisfies $(\alpha, \gamma)$-RDP for
    \begin{equation}
        \gamma = \frac{q\boldsymbol{p}(N_B-1; \Bar{N}, q)}{1-\boldsymbol{P}(N_B-1; \Bar{N}, q)} + \frac{2q^2 }{\sigma^2}\alpha,
    \end{equation}
where $\boldsymbol{p}(\cdot, \Bar{N}, q)$ and $\boldsymbol{P}(\cdot, \Bar{N}, q)$ are defined as the probability mass function and cumulative distribution function of the binomial distribution with parameters $\Bar{N}$ and $q$, respectively.
\end{theorem}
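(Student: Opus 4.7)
The plan is to reduce SRGM to the Sampled Gaussian Mechanism (SGM) of Mironov et al., observing that SRGM applied to $D$ is precisely SGM applied to $D$ conditioned on the acceptance event $E_D = \{|\Bar D| \ge N_B\}$. For adjacent $D, D'$ with $D' = D \cup \{d^*\}$, I would first reduce to the one-dimensional canonical setting by translation invariance and the $\ell_2$-sensitivity-one assumption, then decompose the conditional output law $\mu_{D'}$ as a mixture over whether $d^*$ enters the retained batch. Letting $p := \Pr[d^* \in \Bar{D'} \mid E_{D'}]$, the key structural observation is $\mu_{D'} = (1-p)\mu_D + p\,\nu$: the ``$d^*$ not sampled'' branch coincides with $\mu_D$ because the conditioning on $|\Bar D| \ge N_B$ agrees with the conditioning defining $\mu_D$, while $\nu$ is the Gaussian-noised output centered at a unit shift.

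Next I would compute $p$ by Bayes' rule using the binomial law of $X := |\Bar D \cap D| \sim \mathrm{Binomial}(\Bar N, q)$, obtaining
\begin{equation*}
p = \frac{q\bigl(\boldsymbol p(N_B{-}1;\Bar N,q) + 1 - \boldsymbol P(N_B{-}1;\Bar N,q)\bigr)}{q\,\boldsymbol p(N_B{-}1;\Bar N,q) + 1 - \boldsymbol P(N_B{-}1;\Bar N,q)},
\end{equation*}
so that $p - q$ equals precisely the first term of the stated $\gamma$ after algebraic simplification. The plan is then to split $\rdalpha{\mu_D}{\mu_{D'}}$ (and its reverse) into (i) a standard SGM-at-rate-$q$ contribution, bounded by Mironov's asymptotic $\tfrac{2q^2}{\sigma^2}\alpha$ under the stated assumptions on $\sigma$, $q$, and $\alpha$, and (ii) an excess contribution due to the gap $p - q$. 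To obtain (ii) as an $\alpha$-free additive term, I would invoke a change-of-measure / quasi-convexity argument for \renyi\ divergence, charging only the extra probability mass $p - q$ as a pure group-privacy cost rather than amplifying it through the Gaussian tail.

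The main obstacle is precisely this additive, $\alpha$-independent treatment of the rejection correction: naively substituting $p$ into Mironov's SGM bound would yield $\tfrac{2p^2}{\sigma^2}\alpha$, inflating the bound multiplicatively and destroying the ``amplification + boundary cost'' decomposition asserted by the theorem. The side conditions $1 \le N_B \le q\Bar N$, $q \le \tfrac15$, $\sigma \ge 4$, together with the two explicit upper bounds on $\alpha$, are exactly those needed for Mironov's leading-order SGM proof to apply cleanly to the rate-$q$ component; verifying that they remain sufficient after the decomposition into parts (i) and (ii), and that the cross terms between the two pieces can be absorbed, will be the final technical check.
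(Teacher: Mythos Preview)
Your setup is sound: conditioning on the acceptance event, the mixture decomposition $\mu_{D'}=(1-p)\mu_D+p\nu$, and the one-dimensional reduction are all correct and match the paper's proof. However, your arithmetic claim that $p-q$ equals the first term of $\gamma$ is off. With $a:=\boldsymbol p(N_B{-}1;N,q)$ and $b:=1-\boldsymbol P(N_B{-}1;N,q)$ one gets $p=\frac{q(a+b)}{qa+b}$, hence $p-q=\frac{qa(1-q)}{qa+b}$, which is \emph{not} $\frac{qa}{b}$. The quantity that actually equals $\frac{qa}{b}$ is $\frac{p-q}{1-p}$, or equivalently $\frac{c_0}{c_1}-1$ in the paper's notation (the ratio of acceptance probabilities on $D'$ versus $D$).

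More importantly, the place you flag as the ``main obstacle'' is exactly where the argument currently fails, and the paper's resolution is neither quasi-convexity nor change-of-measure. The paper does not work with the effective rate $p$ and then try to peel off $p-q$; instead it \emph{drops the boundary terms} in the density $f_1$ of $\mathcal M(D')$---i.e., those sampled batches of $D'$ that attain size $N_B$ only because $d^*$ is included. What remains, $\bar f_1$, is \emph{exactly} $(c_1/c_0)$ times the SGM-at-rate-$q$ density, so the pointwise inequality $f_1\ge\bar f_1$ gives
\[
\rdalpha{f_0}{f_1}\ \le\ \ln\frac{c_0}{c_1}+\rdalpha{\mu_0}{(1-q)\mu_0+q\mu_1},
\]
with no cross terms to absorb. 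In your language this is simply the elementary bound $(1-p)\mu_0+p\mu_1\ge\frac{1-p}{1-q}\bigl((1-q)\mu_0+q\mu_1\bigr)$, and $-\ln\frac{1-p}{1-q}=\ln\frac{c_0}{c_1}=\ln\!\bigl(1+\frac{qa}{b}\bigr)\le\frac{qa}{b}$. Once you see this pointwise lower bound, the $\alpha$-independent additive term falls out immediately and Mironov's SGM bound applies verbatim to the second piece under the stated conditions on $q,\sigma,\alpha$.
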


\Cref{thm:srgm_dp} indicates a quadratic amplification with the subsampling rate $q$ but also a certain impairment on the privacy cost of SRGM. However, for DP-ULR, the distribution of the estimated gradient is not isotropic, i.e., variances in different coordinates are different. Particularly, its covariance matrix is varying and relevant to the sampled batch in each step, making it difficult to derive a universal bound. However, conditioned on \Cref{asp:full_rank}, we utilize the privacy controller to guarantee a minimum level of randomness $\sigma_0^2$. Then, we state our main theorem about DP-ULR below. The proof can be found in \Cref{apdx:dp_dp_ulr}.

\begin{theorem} \label{thm:dp_ulr_dp}
    Assume $\sigma^2$ satisfies \Cref{eq:sigma}. Then, if $1 \le N_B \le q\Bar{N}$, $q\le \frac15$, $\sigma_0 \ge 4$, and $\alpha$ satisfy $1<\alpha \le {\textstyle\frac12}\sigma_0^2A-2\ln \sigma_0$ and $\alpha \le \frac{\frac12\sigma_0^2 A^2-\ln5-2\ln\sigma_0}{A+\ln (q\alpha)+1/(2\sigma_0^2)}$, where $A\coloneqq\ln\left(1+\frac1{q(\alpha-1)}\right)$, DP-ULR (\Cref{alg: DP_LR}) satisfies $(\alpha, \gamma)$-RDP for
    \begin{equation}
        \gamma = \frac{Tq\boldsymbol{p}(N_B-1; \Bar{N}, q)}{1-\boldsymbol{P}(N_B-1; \Bar{N}, q)} + \frac{2Tq^2}{\sigma_0^2}\alpha, \label{eq:dp_cost}
    \end{equation}
    where $\boldsymbol{p}(\cdot, \Bar{N}, q)$ and $\boldsymbol{P}(\cdot, \Bar{N}, q)$ are defined as the probability mass function and cumulative distribution function of the binomial distribution with parameters $\Bar{N}$ and $q$, respectively.
\end{theorem}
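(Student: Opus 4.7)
The overall plan is to reduce the analysis of DP-ULR to a per-step application of Theorem \ref{thm:srgm_dp} (SRGM) and then invoke the additive composition property of RDP over the $T$ training steps. Since the final $\gamma$ in \Cref{eq:dp_cost} is exactly $T$ times the SRGM bound with $\sigma$ replaced by $\sigma_0$, the proof essentially needs to (i) show that each step of \Cref{alg: DP_LR} has the same RDP cost as an SRGM with noise scale $\sigma_0$ acting on a unit-sensitivity function, and (ii) sum over $t$.

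For step (i), I would first use the per-example clipping $\|g_t^l(d_i)\|_2 \le C$ to normalize and reduce the per-example gradient contribution to $\ell_2$-sensitivity $1$ (after dividing by $C$). Next, I would combine \Cref{asp:full_rank}, \Cref{prop:proxy_var} (which gives $\Var(\hat g^l(d)) \approx \tfrac{1}{\sigma^2}\widetilde\Sigma_{\hat g(d)}$), and the averaging-over-$K$ calculation to show that the covariance of the batch-summed estimated gradient $\sum_{d\in B_t} g_t^l(d)$ has minimum eigenvalue at least $C^2\sigma_0^2$. Concretely, the controller's choice of $\sigma$ in \Cref{eq:sigma} was designed precisely so that
\begin{equation*}
\min\bigl(\boldsymbol{\lambda}\bigl(\Sigma_{\sum_d g_t^l(d)}\bigr)\bigr) \;=\; \tfrac{1}{K}\,\min\bigl(\boldsymbol{\lambda}\bigl(\textstyle\sum_d \Sigma_{\hat g(d)}\bigr)\bigr) \;\ge\; C^2\sigma_0^2,
\end{equation*}
so that the effective noise injected into the unit-sensitivity normalized gradient is at least $\sigma_0$ in every direction. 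Applying \Cref{thm:srgm_dp} (the SRGM bound) in this ``worst direction'' then gives per-step RDP cost at most $\frac{q\boldsymbol{p}(N_B-1;\Bar N,q)}{1-\boldsymbol{P}(N_B-1;\Bar N,q)}+\frac{2q^2}{\sigma_0^2}\alpha$. The independent sampling with rejection at threshold $N_B$ in \Cref{alg: DP_LR} matches SRGM exactly, so the subsampling term carries over unchanged.

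For step (ii), I would invoke the standard additive composition of RDP: a sequence of $(\alpha,\gamma_t)$-RDP mechanisms satisfies $(\alpha,\sum_t \gamma_t)$-RDP. Summing the per-step bound over $T$ steps (with $\sigma_0$ constant, as assumed at the end of \Cref{sec:dp_ulr_alg}) gives exactly the announced bound \Cref{eq:dp_cost}. The conditions on $q$, $\sigma_0$, $N_B$, and $\alpha$ are inherited verbatim from the hypotheses of \Cref{thm:srgm_dp} applied with $\sigma=\sigma_0$.

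The main obstacle I anticipate is step (i): justifying that a non-Gaussian mechanism whose covariance matrix is lower-bounded (in the eigenvalue sense) by $C^2\sigma_0^2 \mathbb{I}$ inherits the RDP bound of a genuine isotropic Gaussian mechanism with std $\sigma_0$. \Cref{prop:proxy_var} only gives the variance up to an $O(\sigma^2)$ approximation, and the likelihood ratio proxy is an affine function of $z$ multiplied by the noisy loss $\mcL$, which itself depends on $z$; I would need to either argue the small-$\sigma$ regime makes this approximation tight, or appeal to a direct Rényi-divergence comparison that only uses the covariance lower bound plus the clipping. A secondary subtlety is that each step processes $L$ modules with independent noise on the same batch; I would argue that since the per-example total clipped contribution across modules has the same sensitivity structure and the privacy controller is applied per module with identical $\sigma_0$, the joint per-step release still reduces to a single effective SRGM (rather than $L$ composed ones), so no factor of $L$ enters the final bound.
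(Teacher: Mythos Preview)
Your high-level strategy---per-step RDP bound plus additive composition over $T$ steps---matches the paper exactly, and you correctly identify the crux as step~(i). However, the way you propose to handle step~(i) does not work, and the paper takes a different route.

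You cannot invoke \Cref{thm:srgm_dp} directly: SRGM is defined as isotropic Gaussian noise added to a \emph{deterministic} function of the subsample, whereas in DP-ULR the per-example contribution is itself random with a data-dependent, non-isotropic covariance. There is no black-box reduction from the latter to the former. The paper instead proves (in Appendix~B) a strictly more general single theorem that subsumes both \Cref{thm:srgm_dp} and \Cref{thm:dp_ulr_dp}: it assumes each per-example output $f(d_i)\sim\mathcal{N}(\nu_{d_i},\Sigma_{d_i})$ with $\|\nu_{d_i}\|_2\le 1$, and the only structural hypothesis is $\min\boldsymbol{\lambda}\bigl(\sum_{i\in J}\Sigma_{d_i}\bigr)\ge\sigma^2$ for every admissible $J$. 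The proof works directly with the mixture densities $f_0,f_1$ induced by rejection sampling, bounds $f_1$ below by $c_1[(1-q)+q\Gamma_{\nu_{d_{N+1}}}]f_0/c_0$ (dropping the extra ``edge'' term from batches of size exactly $N_B-1$), and then---using the eigenvalue lower bound---reduces the R\'enyi divergence to a one-dimensional problem along the direction of $\nu_{d_{N+1}}$, at which point the Mironov et al.\ SGM bound applies. The normalizing-constant ratio $c_0/c_1$ is what produces the first term of the bound. So the ``worst-direction'' intuition you sketch is right, but it must be carried out inside a fresh R\'enyi-divergence computation, not by quoting \Cref{thm:srgm_dp}.

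Two of your anticipated obstacles are handled differently than you guess. First, the Gaussianity of $g_t^l(d)$ is not derived from \Cref{prop:proxy_var}; it is \emph{assumed} via the multivariate CLT for large $K$ (Appendix~A.3), so the ``non-Gaussian mechanism'' worry is sidestepped by hypothesis rather than by a robustness argument. Second, the paper's proof never explicitly discusses the $L$-modules issue you raise; your concern that the per-example sensitivity across all layers could scale with $L$ is legitimate and is not visibly addressed in the appendix argument.
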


\subsection{DP-SGD \textit{v.s.} DP-ULR} \label{sec:dp_ulr_vs_dp_sgd}
In this section, we compare DP-SGD and DP-ULR.

\noindent\textbf{Minute DP impairment}. According to \citet{mironov2019renyi}, DP-SGD\citep{abadi2016deep} satisfies $(\alpha, \gamma)$-RDP for a suitable range of $\alpha$ and $\gamma = 2Tq^2\alpha/\sigma^2$, where $\sigma$ is noise scale. If we set our target output std $\sigma_0$ equal to this noise scale, the difference in RDP bound is the impairment term from the SRGM, which is related to the training dataset size. In practice, deep learning datasets are quite large. Subsequently, the impairment term is extremely small to be ignored compared to the latter term. For instance, if we consider $\Bar{N}=10000$, $q=0.01$, and $N_B=50$, the impairment is less than $10^{-10}$, while the second term is greater than $10^{-6}$. We provide further empirical analysis in \Cref{sec:experiment_bound}.

\noindent\textbf{Noise redundancy}. DP-SGD injects isotropic noise directly into the precise gradient, making full use of noise to offer differential privacy. DP-ULR attempts to utilize the inherent randomness of gradient estimation, where noise is added to the intermediate values in the forward pass. It provides privacy protection by ensuring the variance of the estimated gradient in an arbitrary direction no less than a pre-defined level. However, it also means that randomness in many other directions is larger than this level due to the non-isotropy. This redundant noise doesn't contribute to the bound of differential privacy but impairs the utility of training. 

\noindent\textbf{Efficiency and suitability}. A common limitation of DP-SGD is its slower speed compared to traditional SGD, primarily due to the requirement to clip each individual gradient, necessitating an independent backward pass for each example. In contrast, the computation of individual estimated gradients in DP-ULR is inherently separate, allowing for individual clipping without additional computational cost compared to ULR. Additionally, as a variant of ULR, DP-ULR inherits certain advantages over backpropagation-based DP-SGD, including suitability for non-differentiable or black-box settings, high parallelizability, and efficient pipeline design \citep{jiang2023one}. Furthermore, in cases where the loss function cannot be expressed as a summation of individual losses, computing individual gradients to limit example sensitivity becomes challenging for standard backpropagation. In DP-ULR, noise can be injected separately, enabling independent gradient computation, thus broadening its potential applications.

\begin{figure*}[t]
\centering
\begin{minipage}{0.21\textwidth}
    \subfigure[labels]{
    \label{fig:mnist_sub_64_label}
    \includegraphics[trim=0cm 0cm 0cm 0cm, width=\linewidth]{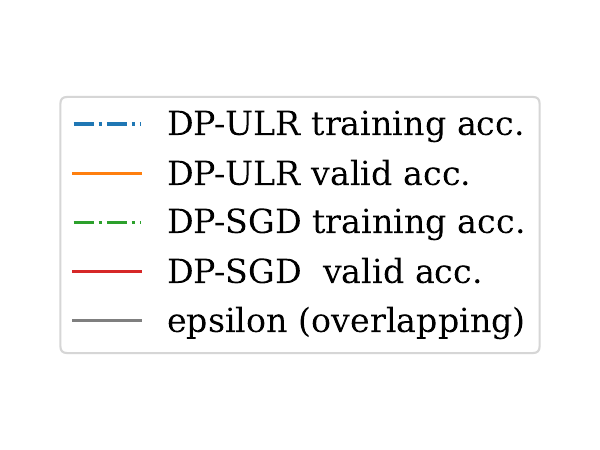}}
\end{minipage}
\hspace{0.2cm}
\begin{minipage}{0.73\textwidth}
    \subfigure[$q=10^{-3},\ \sigma_0=1$.]{
    \label{fig:mnist_sub_64_1}
    \includegraphics[trim=0.1cm 0.2cm 0.1cm 0.3cm, width=0.31\linewidth]{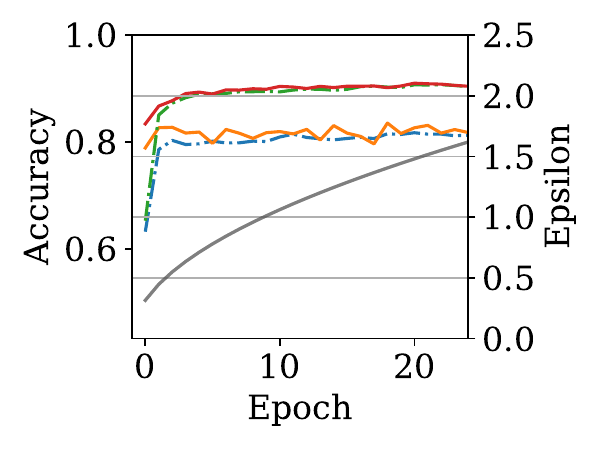}}
    \subfigure[$q=10^{-3},\ \sigma_0=4$.]{
    \label{fig:mnist_sub_64_4}
    \includegraphics[trim=0.1cm 0.2cm 0.1cm 0.3cm, width=0.31\linewidth]{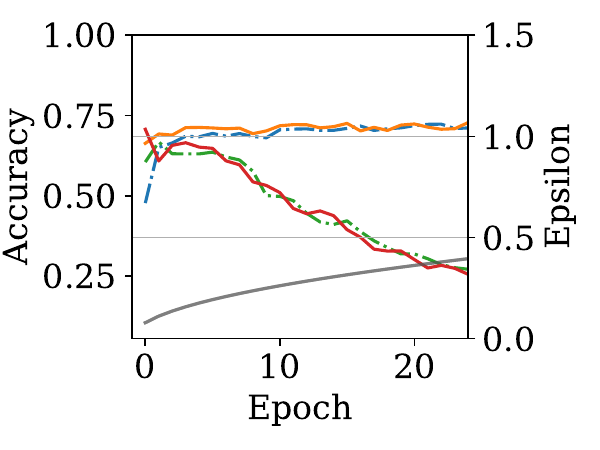}}
    \subfigure[$q=10^{-3},\ \sigma_0=8$.]{
    \label{fig:mnist_sub_64_8}
    \includegraphics[trim=0.1cm 0.2cm 0.1cm 0.3cm, width=0.31\linewidth]{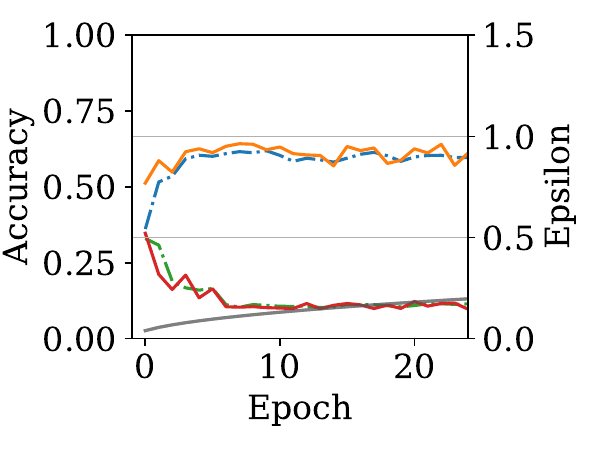}}
    \vspace{-0.1cm}
    \\
    \subfigure[$q=\frac{1}{300},\ \sigma_0=1$.]{
    \label{fig:mnist_sub_200_1}
    \includegraphics[trim=0.1cm 0.2cm 0.1cm 0.3cm, width=0.31\linewidth]{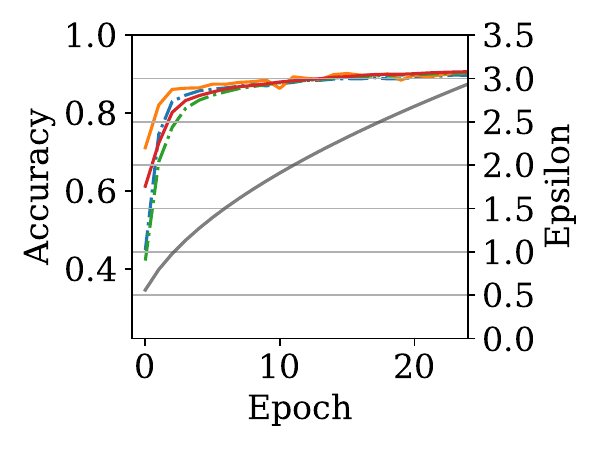}}
    \subfigure[$q=\frac{1}{300},\ \sigma_0=4$.]{
    \label{fig:mnist_sub_200_4}
    \includegraphics[trim=0.1cm 0.2cm 0.1cm 0.3cm, width=0.31\linewidth]{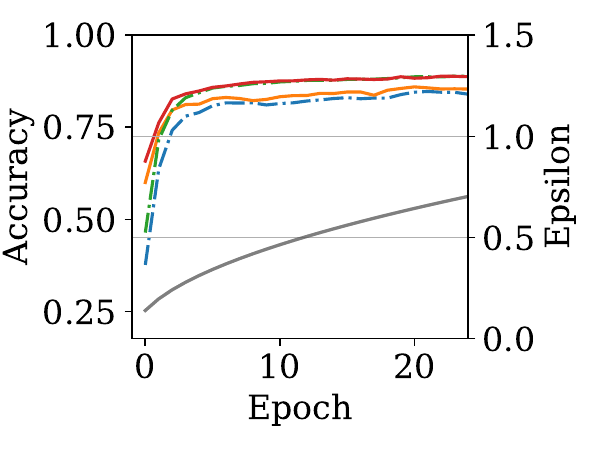}}
    \subfigure[$q=\frac{1}{300},\ \sigma_0=8$.]{
    \label{fig:mnist_sub_200_8}
    \includegraphics[trim=0.1cm 0.2cm 0.1cm 0.3cm, width=0.31\linewidth]{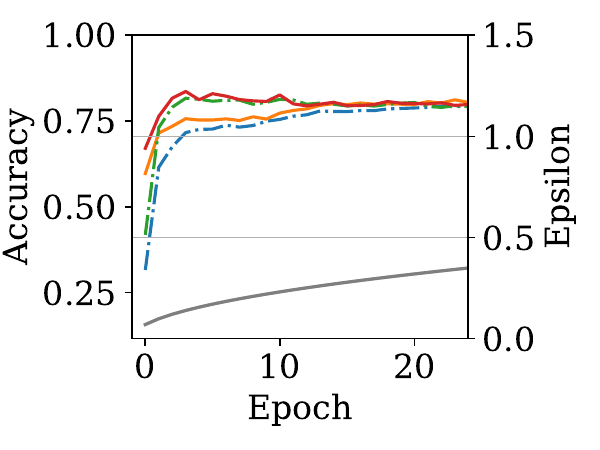}}
    \vspace{-0.1cm}
    \\
    \subfigure[$q=\frac{1}{120},\ \sigma_0=1$.]{
    \label{fig:mnist_sub_500_1}
    \includegraphics[trim=0.1cm 0.2cm 0.1cm 0.3cm, width=0.31\linewidth]{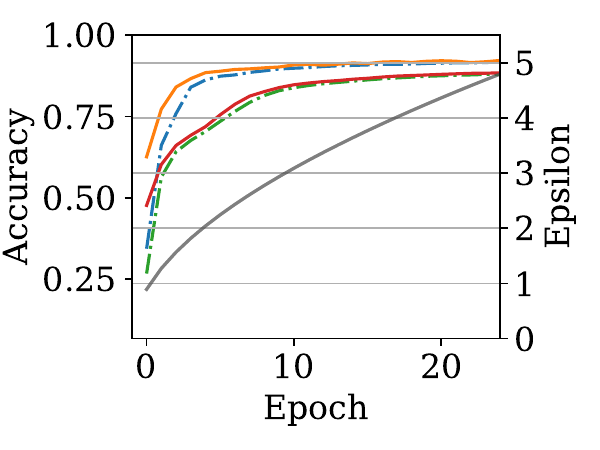}}
    \subfigure[$q=\frac{1}{120},\ \sigma_0=4$.]{
    \label{fig:mnist_sub_500_4}
    \includegraphics[trim=0.1cm 0.2cm 0.1cm 0.3cm, width=0.31\linewidth]{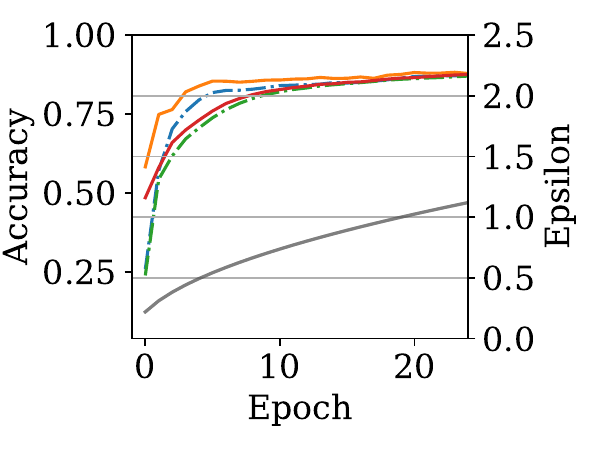}}
    \subfigure[$q=\frac{1}{120},\ \sigma_0=8$.]{
    \label{fig:mnist_sub_500_8}
    \includegraphics[trim=0.1cm 0.2cm 0.1cm 0.3cm, width=0.31\linewidth]{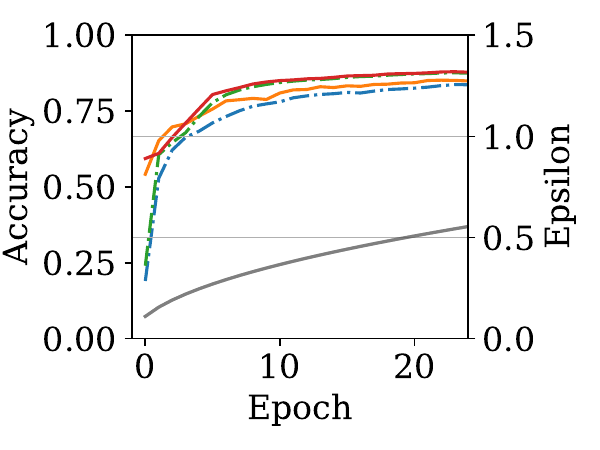}}
    
\end{minipage}
\caption{Optimization dynamics of the MLP training with differential privacy using DP-SGD and our proposed DP-ULR and corresponding $\epsilon$ with $\delta=10^{-5}$.}
\label{Fig.mnist_exp}
\vspace{-6mm}
\end{figure*}

\section{Experiments}
\subsection{Analysis of DP bound} \label{sec:experiment_bound}
In our approach, the introduction of the sampling-with-rejection technique ensures adequate randomness across all directions in the parameter space at each step of training. As detailed in Equation \ref{eq:dp_cost}, the sampling-with-rejection operation introduces an additional term to the DP cost, not present in traditional algorithms that employ common i.i.d. Poisson sampling. Despite this, we illustrate that the impact is minimal in typical deep-learning scenarios.

\Cref{fig:th2_bound} provides contour plots of the ratio between the first and second terms of the DP cost across various dataset sizes and rejection thresholds, based on theoretical results with parameters $\alpha = 1.1$ and $\sigma_0 = 4$.
As shown, when the dataset size $N \geq 10^3$ and the rejection threshold $N_B$ is slightly less than the mean batch size $qN$ (if without rejection), the ratio of the first impairment term (introduced by rejection sampling) to the second term is less than $10^{-3}$. This empirical evidence suggests that the increased privacy costs due to our sampling method are effectively negligible.

Moreover, as the dataset size increases, the relative impact of the first term on the privacy cost diminishes further, underlining the suitability of our method for training on large-scale datasets. This scalability is crucial for deploying differential privacy in real-world applications where large models are trained on extensive data collections.

\subsection{Evaluations on MLP}
\label{sec:eval_MLP}

\noindent\textbf{Model and dataset}. \noindent\textbf{Model and Dataset:} We evaluated our proposed DP-ULR method by training a multilayer perceptron (MLP) with four layers containing $128$, $64$, $32$, and $10$ neurons, respectively, each employing GELU activations. The MLP was trained on the MNIST dataset, comprising $60,000$ training images and $10,000$ validation images across $10$ classes.

\noindent\textbf{Experiment Settings}. We configured DP-ULR with a learning rate of 0.01, utilizing the Adam optimizer with cross-entropy loss. We utilize the approach of adding extra noise to remediate the violence of the full rank. We compare our method with DP-SGD, utilizing the OPACUS open-source implementation. For DP-SGD, we use the default settings: a learning rate of 0.1 with the SGD optimizer. We also experimented with the Adam optimizer and a learning rate of 0.01 but found the default settings provided better performance. Furthermore, we tested DP-SGD using both standard Poisson sampling, which is required by the theory, and a fixed batch size implementation, observing minimal performance differences. Thus, for consistency, results with the fixed batch size implementation are reported.  For both DP-ULR and DP-SGD, the learning rate is reduced by 0.85 every $10$ epoch, training is conducted over 25 epochs, and the clipping threshold $C$ is set to $1$.

\begin{table}[t]
\centering
\resizebox{\columnwidth}{!}{
\begin{tabular}{c|c|c|c|c|c}
\toprule
\multicolumn{2}{c|}{Method}                  & \multicolumn{2}{c|}{\textbf{DP-ULR}}                                                                       & \multicolumn{2}{c}{\textbf{DP-SGD}}                                                                        \\ \midrule
\multicolumn{1}{c|}{Batch size} & $\sigma_0$ & \multicolumn{1}{c|}{Training acc.(\%)}                         & Valid acc. (\%)                           & \multicolumn{1}{c|}{Training acc.(\%)}                         & Valid acc. (\%)                           \\ \midrule
\multicolumn{1}{c|}{}           & 0.5        & \multicolumn{1}{c|}{{ 85.43$_{\pm 0.53}$}} & { 86.12$_{\pm 0.45}$} & \multicolumn{1}{c|}{{ 93.94$_{\pm 0.15}$}} & { 94.14$_{\pm 0.18}$} \\
\multicolumn{1}{c|}{}           & 1          & \multicolumn{1}{c|}{{ 81.81$_{\pm 0.60}$}} & { 82.73$_{\pm 1.20}$} & \multicolumn{1}{c|}{{ 90.90$_{\pm 0.08}$}} & { 91.20$_{\pm 0.10}$} \\
\multicolumn{1}{c|}{64}         & 2          & \multicolumn{1}{c|}{{ 78.04$_{\pm 0.94}$}} & { 78.80$_{\pm 1.42}$} & \multicolumn{1}{c|}{{ 78.61$_{\pm 0.80}$}} & { 78.32$_{\pm 0.84}$} \\
\multicolumn{1}{c|}{}           & 4          & \multicolumn{1}{c|}{{ 70.71$_{\pm 2.08}$}} & { 72.14$_{\pm 2.19}$} & \multicolumn{1}{c|}{{ 67.33$_{\pm 1.04}$}} & { 68.73$_{\pm 1.74}$} \\
\multicolumn{1}{c|}{}           & 8          & \multicolumn{1}{c|}{{ 57.88$_{\pm 4.53}$}} & { 58.65$_{\pm 6.32}$} & \multicolumn{1}{c|}{{ 33.63$_{\pm 0.65}$}} & { 32.07$_{\pm 4.17}$} \\ \midrule
\multicolumn{1}{c|}{}           & 0.5          & \multicolumn{1}{c|}{{ 91.55$_{\pm 0.19}$}} & { 91.98$_{\pm 0.25}$} & \multicolumn{1}{c|}{{ 90.47$_{\pm 0.25}$}} & { 90.68$_{\pm 0.30}$} \\
\multicolumn{1}{c|}{}           & 1          & \multicolumn{1}{c|}{{ 89.24$_{\pm 0.49}$}} & { 90.11$_{\pm 0.59}$} & \multicolumn{1}{c|}{{ 90.54$_{\pm 0.24}$}} & { 90.77$_{\pm 0.31}$} \\
\multicolumn{1}{c|}{200}        & 2          & \multicolumn{1}{c|}{{ 86.15$_{\pm 0.41}$}} & { 87.11$_{\pm 0.59}$} & \multicolumn{1}{c|}{{ 90.63$_{\pm 0.14}$}} & { 90.95$_{\pm 0.29}$} \\
\multicolumn{1}{c|}{}           & 4          & \multicolumn{1}{c|}{{ 83.20$_{\pm 0.49}$}} & { 84.45$_{\pm 0.62}$} & \multicolumn{1}{c|}{{ 89.07$_{\pm 0.16}$}} & { 89.63$_{\pm 0.14}$} \\
\multicolumn{1}{c|}{}           & 8          & \multicolumn{1}{c|}{{ 78.91$_{\pm 1.12}$}} & { 80.05$_{\pm 0.50}$} & \multicolumn{1}{c|}{{ 78.50$_{\pm 0.77}$}} & { 79.14$_{\pm 0.67}$} \\ \midrule
\multicolumn{1}{c|}{}           & 0.5          & \multicolumn{1}{c|}{{ 93.92$_{\pm 0.15}$}} & { 94.19$_{\pm 0.15}$} & \multicolumn{1}{c|}{{ 87.56$_{\pm 0.59}$}} & { 87.98$_{\pm 0.64}$} \\
\multicolumn{1}{c|}{}           & 1          & \multicolumn{1}{c|}{{ 91.73$_{\pm 0.25}$}} & { 92.04$_{\pm 0.34}$} & \multicolumn{1}{c|}{{ 87.56$_{\pm 0.60}$}} & { 87.98$_{\pm 0.62}$} \\
\multicolumn{1}{c|}{500}        & 2          & \multicolumn{1}{c|}{{ 89.33$_{\pm 0.42}$}} & { 90.31$_{\pm 0.55}$} & \multicolumn{1}{c|}{{ 87.59$_{\pm 0.60}$}} & { 88.00$_{\pm 0.62}$} \\
\multicolumn{1}{c|}{}           & 4          & \multicolumn{1}{c|}{{ 86.56$_{\pm 0.80}$}} & { 87.63$_{\pm 0.80}$} & \multicolumn{1}{c|}{{ 87.66$_{\pm 0.51}$}} & { 87.97$_{\pm 0.61}$} \\
\multicolumn{1}{c|}{}           & 8          & \multicolumn{1}{c|}{{ 82.87$_{\pm 0.89}$}} & { 84.53$_{\pm 0.71}$} & \multicolumn{1}{c|}{{ 87.68$_{\pm 0.47}$}} & { 88.16$_{\pm 0.59}$} \\ \bottomrule
\end{tabular}
}
\caption{The classification accuracy of MLP on the MNIST dataset.}
\label{tab:mnist_result}
\vspace{-7mm}
\end{table}

\begin{figure*}[t]
\centering
\subfigure[DP-SGD.]{
\label{Fig.sub1}
\includegraphics[trim=0.1cm 0.3cm 0.1cm 0.3cm, width=0.23\linewidth]{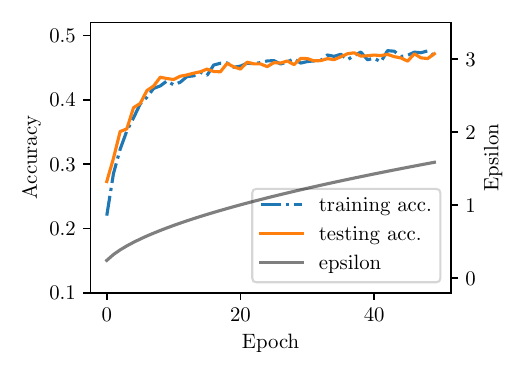}}
\subfigure[$q=\frac{1}{50},\ \sigma_0=5$.]{
\label{Fig.sub2}
\includegraphics[trim=0.1cm 0.3cm 0.1cm 0.3cm, width=0.23\linewidth]{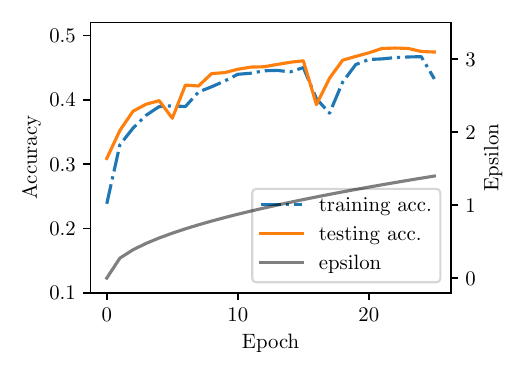}}
\subfigure[$q=\frac{1}{25},\ \sigma_0=8$.]{
\label{Fig.sub3}
\includegraphics[trim=0.1cm 0.3cm 0.1cm 0.3cm, width=0.23\linewidth]{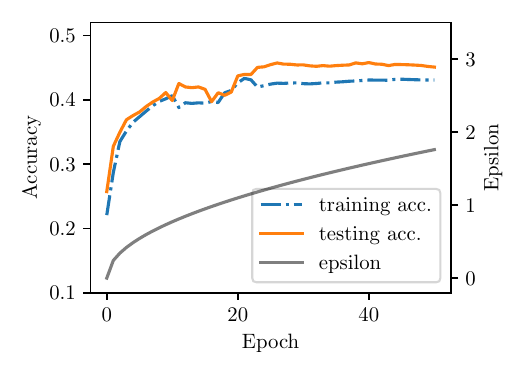}}
\subfigure[$q=\frac{1}{25},\ \sigma_0=4$.]{
\label{Fig.sub4}
\includegraphics[trim=0.1cm 0.3cm 0.1cm 0.3cm, width=0.23\linewidth]{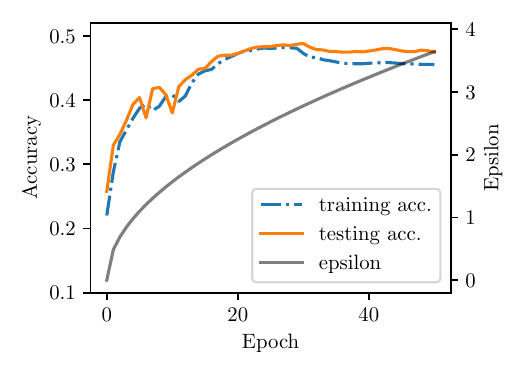}}
\caption{Evaluation results of the CNN training with differential privacy using DP-SGD in (a) and our proposed DP-ULR in (b)--(d).}
\label{Fig.cifar_exp}
\vspace{-1mm}
\end{figure*}

We fix the target $\delta=10^{-5}$ and experiment with different settings of batch size $B = 64, 200, 500$, corresponding to different sample rates $q = 10^{-3}, \frac{1}{300}, \frac{1}{120}$, and target std level (noise level) $\sigma_0 = 0.5, 1, 2,4,8$. For all settings, we repeat the experiments $5$ times with different random seeds and report the average and standard
 deviations. We also conduct ablation experiments on model sizes, of which results and analysis are provided in \Cref{apdx:diff_model_size_exp}.

\noindent\textbf{Experiment Results}. \Cref{tab:mnist_result} presents the training and validation accuracies for DP-ULR and DP-SGD across varying batch sizes and target std levels or noise levels. For DP-ULR, we report the final epoch accuracy, while for DP-SGD, due to potential severe performance degradation over iterations, we report the highest accuracy achieved during training if needed. We can see that DP-ULR shows improved performance with larger batch sizes, while DP-SGD performs better with smaller batch sizes. This is probably because the noise redundancy is more severe in smaller batch sizes, degrading the accuracy of gradient estimation. Consequently, DP-ULR outperforms DP-SGD with the large batch, underperforms with the small batch, and has a competitive performance compared to DP-SGD. Another interesting observation is that DP-ULR is more sensitive to noise scale $\sigma_0$ in large batch sizes, whereas DP-SGD shows greater sensitivity to $\sigma_0$ in small batch sizes. 

\Cref{Fig.mnist_exp} illustrates the optimization dynamics of training and valid accuracy alongside the corresponding $\epsilon$ value with fixed $\delta=10^{-5}$. We report the results with different sample rates $q=10^{-3},\frac{1}{300},\frac{1}{120}$ and noise level $\sigma_0=1,4,8$. To present a fair comparison, we compute the $\epsilon$ of DP-SGD using the RDP bound from \citet{mironov2019renyi} rather than the value provided by OPACUS, which is computed differently. The overlapping curves of $\epsilon$ in \Cref{Fig.mnist_exp} suggest that the impairment term in the DP bound is negligible. Our results indicate that, under the same batch size and high noise levels, DP-SGD suffers from performance degradation, whereas DP-ULR continues to converge. 

\subsection{Evaluations on CNN}
We further evaluate the performance of DP-ULR by training a CNN on the CIFAR-10 dataset. The CIFAR-10 dataset has a training set of 50000 images and a test set of 10000 images. We use the ResNet-5 as our studied model. The ResNet-5 has $5$ layers, including $4$ convolutional layers and $1$ fully connected layer. The residual connection is between the third and fourth convolutional layers. For the convolutional layers, we set the number of kernels as 8, 16, 32, and 32, respectively, and all the kernel sizes as 3 × 3 with the stride as $1$ and the activation function as ReLU.

We test our DP-ULR with different sample rates $q$ and target std level $\sigma_0$. We experiment with DP-SGD setting batch size as $64$ and noise level as $1$. The results are shown in \Cref{Fig.cifar_exp}. We can see that with DP-ULR, the convergence of the model fluctuates and sometimes drops abruptly. However, by selecting suitable parameters, our proposed DP-ULR can achieve comparable performance in the end with DP-SGD in terms of both accuracy and privacy cost. 

\vspace{-2mm}
\section{Conclusions}
In this paper, we propose a forward-learning DP algorithm, Differential Private Unified Likelihood Ratio (DP-ULR). Unlike traditional backpropagation-based methods such as DP-SGD, which rely on computing individual gradients and adding noise, DP-ULR leverages the inherent randomness in forward-learning algorithms to achieve differential privacy. Our approach introduces a novel batch sampling operation with rejection and a dynamically managed privacy controller to ensure robust privacy guarantees. 

Our theoretical analysis demonstrates that the additional privacy cost introduced by the sampling-with-rejection operation is negligible, particularly in large-scale deep-learning applications. This indicates the scalability and efficiency of DP-ULR in practical settings. Furthermore, our empirical results show that DP-ULR performs competitively compared to traditional DP training algorithms, maintaining the same privacy loss constraints while offering high parallelizability and suitability for non-differentiable or black-box modules. 

In summary, DP-ULR provides a promising alternative to existing differential privacy methods, combining the benefits of forward learning with rigorous privacy guarantees. Nevertheless, we discuss our limitations and potential future works in \Cref{apdx:lmt_fut_wrk}.

\bibliography{reference}
\bibliographystyle{icml2025}

\newpage
\appendix
\onecolumn

\renewcommand\thefigure{\Alph{section}\arabic{figure}} 
\renewcommand\thetable{\Alph{section}\arabic{table}}    
\setcounter{figure}{0}  
\setcounter{table}{0}

\section{Differentially Private Unified Likelihood Ratio method}

\subsection{List of symbols} \label{apdx:list_of_symbols}
\begin{tabular}{cp{0.7\textwidth}}
\toprule
  $\mathcal{D}$ & domain (set of dataset) \\
  $\Bar{D}$ & large data pool \\
  $D$ & dataset \\
\midrule
  $N$ & size of dataset (number of examples) \\
  $\Bar{N}$ & lower limit of the size of dataset\\
  $N_B$ & rejection threshold (hyperparameter) \\
  $N_0$ & a level of batch size mentioned in \Cref{asp:full_rank} \\
\midrule
  $B$ & batch (sample from dataset) \\
  $d$ & example \\
  $x$ & input \\
  $y$ & label \\
  $\varphi$ & non-parameter structure of model \\
  $\theta$ & parameter of model \\
  $d_\theta$ & number of dimensions of parameter \\
  $v$ & output of model \\
  $z$ & noise (random variable)\\
  $\sigma$ & std of noise\\
  $\sigma_0$ & a required level of the likelihood ratio proxy’s std (hyperparameter)\\
  $\Sigma$ & covariance matrix\\
\midrule
  $\eta$ & learning rate (hyperparameter) \\
  $q$ & sampling rate (hyperparameter) \\
  $K$ & repeat time (hyperparameter) \\
  $C$ & overall clip bound (hyperparameter) \\
\midrule
  $\mbE$ & expectation \\
  $f(\cdot)$ & probability density function \\
  $l(\cdot, \cdot)$ & loss function (a function)\\
  $\mcL$ & loss (a variable)\\
  $\mcL_0$ & loss without injected noise\\
  $L$ & number of layers \\
  $T$ & number of training steps \\
  $l$ & index of layer, $l=1,...,L$ \\
  $t$ & index of training step, $l=1,...,T$ \\
  $t$ & index of example \\
\midrule
  $x^l$ & input of $l$-th layer \\
  $\varphi^l$ & non-parameter structure of $l$-th layer \\
  $\theta^l$ & parameter of $l$-th layer \\
  $v^l$ & output of $l$-th layer \\
  $z^l$ & noise that we add to $v^l$, $l=1,...,L-1$ \\
  $d_l$ & dimension of $x^l$ \\
  \bottomrule
\end{tabular}\\

\subsection{Restatement of the previous theorem}
We restate the Theorem 1 from the previous work \citet{jiang2023one}.
\begin{theorem} \label{thm:lr}
    Given an input data $x$, assume that $g^l(\xi)\coloneqq f(\xi - \varphi^l(x^l;\theta^l))$ is differentiable, and 
    \begin{align}
    \mathbb{E}\left[ \int_{\mathbb{R}^{d_{l+1}}} \big|\mathbb{E}\left[\mathcal{L}(x^{L})|\xi,x^{l} \right] \big| \sup_{\theta^l\in\Theta^l}\big|\nabla_{\theta^{l}} g^{l}(\xi)\big| d\xi\right] <\infty. \label{eq:int}
    \end{align}
    Then, we have
    \begin{align}
        \nabla_{\theta^{l}} \mathbb{E}\left[ \mathcal{L}(x^{L}) \right] = \mathbb{E}\left[-\mathcal{L}(x^{L})  J^{\top}_{\theta^l} \varphi^l(x^l;\theta^l) \nabla_z\ln f^l(z^l)  \right].
    \end{align}
\end{theorem}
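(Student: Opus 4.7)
The plan is to establish the identity via the classical score-function (likelihood-ratio) argument. The key observation is that, conditionally on $x^l$, injecting noise $z^l$ with density $f^l$ makes the input to layer $l{+}1$, namely $\xi = \varphi^l(x^l;\theta^l) + z^l$, a random variable whose conditional density is exactly $g^l(\xi) = f^l(\xi - \varphi^l(x^l;\theta^l))$. The parameter $\theta^l$ enters the joint law of $(x^L,\xi,x^l)$ \emph{only} through this conditional density, since $x^l$ is produced by layers indexed $<l$, and $\mcL(x^L)$ given $(\xi,x^l)$ depends on downstream parameters, noise, and structure, but not on $\theta^l$.

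First, I would use the tower property to write
\begin{equation*}
\mbE[\mcL(x^L)] = \mbE\!\left[\int_{\mbR^{d_{l+1}}} \mbE[\mcL(x^L)\mid \xi, x^l]\, g^l(\xi)\, d\xi\right],
\end{equation*}
where the outer expectation is over $x^l$ and all upstream noise. Pulling $\nabla_{\theta^l}$ inside the outer expectation is immediate since $x^l$ does not depend on $\theta^l$; pulling it past the inner $\xi$-integral is exactly what the integrability hypothesis~(\ref{eq:int}) is designed for, supplying a dominating function $|\mbE[\mcL(x^L)\mid\xi,x^l]|\sup_{\theta^l\in\Theta^l}|\nabla_{\theta^l} g^l(\xi)|$ with finite expected $\xi$-integral, so the Leibniz rule (dominated convergence) applies. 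After the swap, I differentiate $g^l$ by the chain rule,
\begin{equation*}
\nabla_{\theta^l} g^l(\xi) = -J^{\top}_{\theta^l}\varphi^l(x^l;\theta^l)\, \nabla_z f^l(z)\big|_{z=\xi-\varphi^l(x^l;\theta^l)},
\end{equation*}
and then apply the log-derivative identity $\nabla_z f^l = f^l\,\nabla_z \ln f^l$ to obtain $\nabla_{\theta^l} g^l(\xi) = -J^{\top}_{\theta^l}\varphi^l(x^l;\theta^l)\, f^l(z^l)\, \nabla_z \ln f^l(z^l)$.

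Substituting this expression back, the factor $f^l(z^l)$ serves as the density weight that converts the inner $\xi$-integral into a conditional expectation over $z^l$ given $x^l$; combining with the outer expectation via the tower property yields
\begin{equation*}
\nabla_{\theta^l}\mbE[\mcL(x^L)] = \mbE\!\left[-\mcL(x^L)\, J^{\top}_{\theta^l}\varphi^l(x^l;\theta^l)\, \nabla_z \ln f^l(z^l)\right],
\end{equation*}
which is exactly the stated identity. The main (and essentially only) technical obstacle is rigorously justifying the differentiation-under-integral step; assumption~(\ref{eq:int}) is precisely the uniform-in-$\theta^l$ integrable majorant needed for dominated convergence, so the remainder reduces to routine multivariable calculus: the chain rule for $g^l$ through the translation $\xi-\varphi^l$, the log-derivative trick, and the re-identification of the weighted integral as an expectation. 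Since this is a restatement of a prior result of \citet{jiang2023one}, the argument simply reproduces their derivation specialized to the layerwise noise-injection setting used by DP-ULR.
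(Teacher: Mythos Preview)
Your proposal is correct and follows essentially the same score-function (likelihood-ratio) argument as the paper: tower property to isolate the layer-$l$ noise, recognition that $\theta^l$ enters only through the shifted density $g^l(\xi)=f^l(\xi-\varphi^l(x^l;\theta^l))$, differentiation under the integral justified by the integrability hypothesis, chain rule plus the log-derivative trick, and re-assembly via the tower property. The paper's proof spells out two explicit changes of variable ($\zeta\mapsto\xi=v^l+\zeta$ and back) that you compress into the single identification $g^l(\xi)\,d\xi$, but the logical content is identical.
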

\begin{proof}
To update the $l$-th layer's parameter, we need to calculate the gradient for $\theta^l$. We have
\begin{align*}
    \nabla_{\theta^l} \mbE_{z^1,...,z^{L-1}}[\mcL(v^L, y)] &= \nabla_{\theta^l} \mbE_{z^1,...,z^{l-1}} \left[ \mbE_{z^{l},...,z^{L-1}}[\mcL(v^L, y) \mid v^l] \right] \\
    &= \nabla_{\theta^l} \mbE_{z^1,...,z^{l-1}} \left[ \mbE_{z^l} \left[ \mbE_{z^{l+1},...,z^{L-1}}[\mcL(v^L, y) \mid z^l, v^l] \mid v^l \right] \right].
\end{align*}
The conditional expectation $\mbE_{z^{l+1},...,z^{L-1}}[\mcL(v^L, y) \mid z^l, v^l]$ is only related to the sum of  $v^l$ and $z^l$, $x^{l+1} \coloneqq v^l + z^l$. It means $\mbE_{z^{l+1},...,z^{L-1}}[\mcL(v^L, y) \mid z^l, v^l] = \mbE_{z^{l+1},...,z^{L-1}}[\mcL(v^L, y) \mid x^{l+1}] |_{x^{l+1} = v^l + z^l}$. We denote $h(\zeta) \coloneqq \mbE_{z^{l+1},...,z^{L-1}}[\mcL(v^L, y) \mid x^{l+1}] |_{x^{l+1} = \zeta}$. Then, we have
\begin{align*}
    \nabla_{\theta^l} \mbE_{z^1,...,z^{L-1}}[\mcL(v^L, y)]
    &= \nabla_{\theta^l} \mbE_{z^1,...,z^{l-1}} \left[ \mbE_{z^l} [ h(v^l + z^l) \mid v^l ] \right]\\
    &= \nabla_{\theta^l} \mbE_{z^1,...,z^{l-1}} \left[ \int_{\mbR^{d_{l+1}}} h(v^l + \zeta) f_{z^l}(\zeta) d\zeta \right]
\end{align*}
By changing the variable $\zeta$ to $\xi = v^l + \zeta$, we have
\begin{align}
    \int_{\mbR^{d_{l+1}}} h(v^l + \zeta) f_{z^l}(\zeta) d\zeta = \int_{\mbR^{d_{l+1}}} h(\xi) f_{z^l}(\xi - v^l) d\xi.
\end{align}
Since $h(\cdot)$ is not related to $\theta^l$ and $v^l = \varphi^l(x^l;\theta^l)$, we have
\begin{align*}
    \nabla_{\theta^l} \mbE_{z^1,...,z^{L-1}}[\mcL(v^L, y)]
    &= \nabla_{\theta^l} \mbE_{z^1,...,z^{l-1}} \left[ \int_{\mbR^{d_{l+1}}} h(\xi) f_{z^l}(\xi - v^l) d\xi \right]\\
    &= \mbE_{z^1,...,z^{l-1}} \left[ \int_{\mbR^{d_{l+1}}} \nabla_{\theta^l} \left( h(\xi) f_{z^l}(\xi - v^l) \right) d\xi \right]\\
    &= \mbE_{z^1,...,z^{l-1}} \left[ \int_{\mbR^{d_{l+1}}} h(\xi) \nabla_{\theta^l} f_{z^l}(\xi - v^l) d\xi \right].
\end{align*}
By the chain rule, 
\begin{align}
    \nabla_{\theta^l} f_{z^l}(\xi - v^l) = -D_{\theta^l}^\top v^l \cdot \nabla_\zeta f_{z^l}(\zeta) |_{\zeta = \xi - v^l},
\end{align}
where $D_{\theta^l} v^l \in \mbR^{d_{l+1} \times d_{\theta^l}}$ is the Jacobian matrix of $v^l =\varphi^l(x^l;\theta^l)$ with respect to $\theta^l$. Thus, we have
\begin{align}
    \nabla_{\theta^l} \mbE_{z^1,...,z^{L-1}}[\mcL(v^L, y)]
    &= \mbE_{z^1,...,z^{l-1}} \left[ \int_{\mbR^{d_{l+1}}} \left( - h(\xi) D_{\theta^l}^\top v^l \cdot \nabla_\zeta f_{z^l}(\zeta) |_{\zeta = \xi - v^l} \right) d\xi \right]
\end{align}
By changing the variable from $\xi$ back to $\zeta = \xi - v^l$, we have
\begin{align*}
    &~~~~~\nabla_{\theta^l} \mbE_{z^1,...,z^{L-1}}[\mcL(v^L, y)] \\
    &= \mbE_{z^1,...,z^{l-1}} \left[ \int_{\mbR^{d_{l+1}}} \left( - h(\zeta + v^l) D_{\theta^l}^\top v^l \cdot \nabla_\zeta f_{z^l}(\zeta) \right) d\zeta \right] \\
    &= \mbE_{z^1,...,z^{l-1}} \left[ \int_{\mbR^{d_{l+1}}} \left(- h(\zeta + v^l) D_{\theta^l}^\top v^l \cdot \nabla_\zeta \ln f_{z^l}(\zeta) \right) f_{z^l}(\zeta) d\zeta \right] \\ 
    &= \mbE_{z^1,...,z^{l-1}} \left[ \mbE_{z^l} \left[- h(z^l + v^l) D_{\theta^l}^\top v^l \cdot \nabla_\zeta \ln f_{z^l}(\zeta) |_{\zeta = z^l} \mid v^l \right] \right] \\
    &= \mbE_{z^1,...,z^{l-1}} \left[ \mbE_{z^l} \left[- \mbE_{z^{l+1},...,z^{L-1}}[\mcL(v^L, y) \mid z^l, v^l] D_{\theta^l}^\top v^l \cdot \nabla_\zeta \ln f_{z^l}(\zeta) |_{\zeta = z^l} \mid v^l \right] \right] \\
    &= \mbE_{z^1,...,z^{l-1}} \left[ \mbE_{z^l} \left[ \mbE_{z^{l+1},...,z^{L-1}}[-\mcL(v^L, y) D_{\theta^l}^\top v^l \cdot \nabla_\zeta \ln f_{z^l}(\zeta) |_{\zeta = z^l} \mid z^l, v^l] \mid v^l \right] \right] \\
    &=\mbE_{z^1,...,z^{l-1}} \left[ \mbE_{z^{l},...,z^{L-1}} \left[-\mcL(v^L, y) D_{\theta^l}^\top v^l \cdot \nabla_\zeta \ln f_{z^l}(\zeta) |_{\zeta = z^l} \mid v^l \right] \right]  \\
    &=\mbE_{z^1,...,z^{L-1}} \left[ -\mcL(v^L, y) D_{\theta^l}^\top \varphi^l(x^l;\theta^l) \cdot \nabla_\zeta \ln f_{z^l}(\zeta) |_{\zeta = z^l}  \right].
\end{align*}
\end{proof}

\subsection{Distribution of estimated gradients} \label{apdx:dist_est_grad}

\noindent\textbf{Expectation and variance}. A result from \citet{jiang2023one} detailed as \Cref{thm:lr} in last subsection demonstrates that the expectation of our likelihood ratio gradient proxy equals the expectation of gradient with noise added, i.e., $\mbE_z(\hat{g}^l(d)) = \mbE_z(\nabla_{\theta^l}\mcL)$. Subsequently, it follows with \Cref{proo:proxy_mean}. It indicates that while the gradient proxy leads to a certain precision loss, we can control it by selecting noise with a distribution close to $0$, substantiating the utility of DP-ULR. 

In addition to the expectation, the proxy's variance is critical for both utility and privacy. Specifically, one intuitive question is how the variance of $g_t^l(d)$ changes as the std $\sigma$ of injected noise changes. Through asymptotic analysis, we show that the variance of the gradient proxy is inversely proportional to noise variance $\sigma^2$ when $\sigma$ is relatively small, as stated in \Cref{prop:proxy_var}. 

Particularly, Using multivariate Taylor expansion, we can say
\begin{align}
    \mcL = \mcL_0 + \nabla_{x^{l+1}}\mcL|_{x^{l+1}=v^l} \cdot z + \frac{1}{2}\nabla_{x^{l+1}}^2\mcL|_{x^{l+1}=v^l} \cdot z^2
\end{align}
Then we have,
\begin{align}
    \frac{1}{\sigma^2} z\mcL = \frac{1}{\sigma^2}(\mcL_0 + \nabla_{x^{l+1}}\mcL|_{x^{l+1}=v^l} \cdot z + \frac{1}{2} \nabla_{x^{l+1}}^2\mcL|_{x^{l+1}=v^l} \cdot z^2)z
\end{align}
Then the expectation of $\frac{1}{\sigma^2} z\mcL$ is
\begin{align}
    \mbE(\frac{1}{\sigma^2} z\mcL) = 0 + \frac{1}{\sigma^2} \nabla_{x^{l+1}} \mcL|_{x^{l+1}=v^l} \cdot \sigma^2\mathbb{I} + 0 = \nabla_{x^{l+1}} \mcL|_{x^{l+1}=v^l}
\end{align}
Then the covariance matrix of $\frac{1}{\sigma} z_k\mcL_k$ is 
\begin{align}
    \Var(\frac{1}{\sigma^2} z\mcL) &= \Cov(\frac{1}{\sigma} z\mcL, \frac{1}{\sigma^2} z\mcL)\\
    &=\mbE\left( \frac{\mcL^2}{\sigma^4} z \cdot z^\top \right) - \nabla_{x^{l+1}} \mcL|_{x^{l+1}=v^l} \cdot \nabla_{x^{l+1}}^\top \mcL|_{x^{l+1}=v^l}\\
    &=\frac{1}{\sigma^2} \mcL_0 \mathbb{I} + (2\mcL_0 \nabla_{x^{l+1}}^2\mcL|_{x^{l+1}=v^l}+ \nabla_{x^{l+1}} \mcL|_{x^{l+1}=v^l} \cdot \nabla_{x^{l+1}}^\top \mcL|_{x^{l+1}=v^l}  \\
    &~~~~  + \tr(\mcL_0 \nabla_{x^{l+1}}^2\mcL|_{x^{l+1}=v^l} + \nabla_{x^{l+1}} \mcL|_{x^{l+1}=v^l} \cdot \nabla_{x^{l+1}}^\top \mcL|_{x^{l+1}=v^l})\mathbb{I})  + \sigma^2...
\end{align}
When $\sigma$ approaches zero, the first term dominates others. Therefore, we have
\begin{equation}
    \setlength\abovedisplayskip{3pt}
    \setlength\belowdisplayskip{3pt}
    \Var(\frac{1}{\sigma^2} z\mcL) \approx \frac{\mcL_0^2}{\sigma^2} \mathbb{I} .
\end{equation}
Since $\hat{g}^l(d) = \frac{1}{\sigma^2}D_{\theta^l}^\top v^l \cdot z\mcL$, we have

\begin{equation}
    \setlength\abovedisplayskip{3pt}
    \setlength\belowdisplayskip{3pt}
    \Var(\hat{g}^l(d)) \approx \frac{\mcL_0^2}{\sigma^2} D_{\theta^l}^\top v^l \cdot D_{\theta^l} v^l .
\end{equation}

\noindent\textbf{Distribution of estimated gradients}. Differential privacy guarantees are highly sensitive to the distribution of the mechanism's outputs. In the common strategy, the Gaussian noise is added to the output, making it also a Gaussian distribution given the sampling result. On the contrary, in our method, the Gaussian noise is injected into the intermediate value, leaving the final output's distribution a mystery. For a specific example, $(x_i, y_i) \in \mathcal{D}$, suppose each sampling outcome $D_{\theta^l}^\top v_i^l \cdot \frac{1}{\sigma^2}z\mcL$ has the mean vector $\mu_t^{l, i}$ and the covariance matrix $\Sigma_t^{l, i}$. Recall that the final gradient estimator is obtained by averaging $K$ repetitions. Then, the estimated gradient $g_t^l(x_i)$ can be seen as a multivariate Gaussian distribution, $\mathcal{N}(\mu_t^{l, i}, \frac{1}{K}\Sigma_t^{l, i})$, when $K$ is large enough, according to the multidimensional central limit theorem.

\subsection{Discussion of Assumption 1} \label{apdx:diss_of_asmp1}
In \Cref{sec:dp_ulr_alg}, we introduce \Cref{asp:full_rank} to ensure full-rank covariance matrices. A rank-deficient covariance matrix is problematic for differential privacy (DP) as it suggests a complete loss of randomness along certain directions in high-dimensional space. In this section, we discuss when this assumption is likely to hold and potential remedies if it does not.

The covariance matrix of the batch gradient estimator can be expressed as a weighted sum of the transposed Jacobian matrices of output logits with respect to the parameters multiplied by itself. 
\begin{align}
    \Sigma_B \coloneqq \Var(\sum_{d \in B} \hat{g}(d)) = \sum_{d \in B} \Var(\hat{g}(d)) \approx \sum_{d \in B} \frac{\mcL_0^2}{\sigma^2} (D_{\theta}v)^\top \cdot D_{\theta} v
\end{align}
Using the Rayleigh quotient, we can show that the minimum eigenvalues of two semi-definite matrices added together must be greater than the minimum eigenvalues of any of them. This indicates that a larger batch size will facilitate the full rank or further increase the minimum eigenvalue. For the same reason, the rejection mechanism is designed. 

Consider the case of a single input (batch size of 1) passing through a linear layer with input size $(C, H_{\text{in}})$ and output size $(C, H_{\text{out}})$, where $H_{\text{in}}$ and $H_{\text{out}}$ are the numbers of input and output features, respectively, and $C$ is the number of channels. Denote the input as $x = [x_{i,j}]$ and the output as $v = [v_{i,j}]$. We focus on the weight parameter, $w = [w_{i,j}] \in \mbR^{H_{\text{out}} \times H_{\text{in}}}$, because the bias part is always full-rank. Flatten the weight and output as $\Bar{w} = (\Bar{w}_1,...,\Bar{w}_{H_{\text{out}}H_{\text{in}}})$ and $\Bar{v} =(\Bar{v}_1,...,\Bar{v}_{CH_{\text{out}}})$, where $\Bar{w}_{iH_{\text{in}}+j} = w_{i,j}$ and $\Bar{v}_{iH_{\text{out}}+j} = v_{i,j}$. Let the Jacobian matrix of the output with respect to the weight be $D = [D_{i,j}] \in \mathbb{R}^{(CH_{\text{out}}) \times (H_{\text{out}}H_{\text{in}})}$, where $[D_{i,j}] = \frac{\partial \Bar{v}_i}{\partial \Bar{w}_j}$, i.e., 
\begin{equation} \label{eq:batch_cov_matrix}
    [D]_{kH_{\text{out}}+l,mH_{\text{in}}+n} = \frac{\partial \Bar{v}_kH_{\text{out}}+l}{\partial \Bar{w}_mH_{\text{in}}+n} = \frac{\partial v_{k,l}}{\partial w_{m,n}}.
\end{equation}
$D$ is a sparse matrix, where $\frac{\partial v_{k,l}}{\partial w_{m,n}}=0$ when $l \neq m$ and $\frac{\partial v_{k,m}}{\partial w_{m,n}} = x_{k,n}$. Consequently, the transposed Jacobian matrix multiplied by itself, $D^\top \cdot D$, is a block diagonal matrix with identical blocks. Denote each block as $B(D^\top D) \in \mathbb{R}^{H_{\text{in}} \times H_{\text{in}}}$. Without loss of generality, consider one single block. We have $B(D^\top \cdot D) = x^\top \cdot x$, which has at most $C$ ranks. Ideally, when batch diversity is high, the assumption holds if the batch size exceeds the ratio of input features to input channels. Complex layers like convolutional layers are less prone to rank deficiency due to parameter reuse (e.g., kernel sliding on feature maps). Models like ResNet, where linear layers are a minor component, further mitigate this issue.

In practice, the assumption sometimes fails due to infertile diversity in data or intended small batch size. If so, alternative solutions exist. One option is to alter the location where noise is added. Concretely, we could consider a virtual linear with the input of an identical matrix and the weight of the model parameters. Then, adding noise to the logit of this virtual linear layer equals adding noise to the model parameters directly, and the Jacobian matrix would be the identity matrix, ensuring the full rank. Another approach is to add extra noise directly to the estimated gradient, compensating for randomness deficiencies along its eigenvector directions. This involves calculating the batch's gradient covariance matrix by \Cref{eq:batch_cov_matrix}. Next, perform eigendecomposition: $\Sigma_B = Q \cdot \Lambda \cdot Q^{-1}$ and compute the required covariance matrix of the extra noise by $\Sigma_{\text{extra}} = \sigma_0^2C^2 \mathbb{I} - {\text{diag}(\Lambda)}/{K}$, where $\sigma_0$ is the target std scale, $C$ is the clip threshold, and $K$ is the repeat time. After we generate the extra noise with covariance matrix $\Sigma_{\text{extra}}$, we use $Q$ to transform it and then add transformed noise to the estimated gradient of the batch.

\subsection{Comparison to Existing DP Zeroth-Order Methods}
\label{apdx:cmpr_dp_zero}

Several recent works \cite{liu2024differentiallyprivatezerothordermethods, zhang2024dpzeroprivatefinetuninglanguage, tang2024privatefinetuninglargelanguage} propose DP zeroth-order methods that privatize loss values or estimated gradients obtained via two forward passes in zeroth-order optimization for achieving DP guarantee. Our proposed DP-ULR departs from these methods in the following key aspects:

\noindent\textbf{Motivation.} Existing approaches achieve differential privacy by introducing additional noise to zeroth-order gradients or losses. In contrast, our work first noticed that forward learning's inherent randomness has the potential for a "free lunch" to provide privacy guarantees. Motivated by this, we propose DP-ULR, which leverages the noise added for gradient estimation in forward learning algorithms to provide privacy guarantees. 

\noindent\textbf{Core Algorithm and Application Scope.} Existing works utilize the traditional zeroth-order method, Simultaneous Perturbation Stochastic Approximation (SPSA), which adds noise to parameters with dimensions significantly higher than logits—often exceeding 100 times. This leads to substantial increases in computational costs (and estimation variance) as the model size grows, limiting their scalability to complex deep-learning models, particularly for training from scratch. Those existing methods are designed for fine-tuning pre-trained models. In contrast, DP-ULR operates directly on logits, enabling the training of deep learning models from scratch and reducing the computational overhead.

\noindent\textbf{Privacy Bound.} DP-ULR offers superior privacy guarantees compared to methods like ZeroDP \cite{liu2024differentiallyprivatezerothordermethods}. ZeroDP has the most similar zeroth-order optimization setting to us, involving stochastic gradient descent and repeated sampling. The privacy cost of ZeroDP scales quadratically with the number of repetitions \( P \) (Theorem 4.1 in \cite{liu2024differentiallyprivatezerothordermethods}), resulting in rapidly increasing privacy costs for large \( P \). In contrast, DP-ULR's privacy cost is independent of the number of repetitions, ensuring more robust and scalable privacy protection.

\section{Differential Privacy of DP-ULR} \label{apdx:dp_dp_ulr}
The following theorem is a general form for \Cref{thm:srgm_dp} and \Cref{thm:dp_ulr_dp}. In SRGM, isotropic Gaussian noise is added to the deterministic output. Then, the variance of output is irrelevant to the size of the sampled batch, and the minimum eigenvalue is the same as the predefined variance $\sigma^2$. In DP-ULR, we ensure $\min \boldsymbol{\lambda}(\sum_{i \in J} \Sigma_{d_i}) \ge \sigma^2$, $\forall J \in 2^{[N]}$ and $|J| \ge N_B$ by our differentially private controller.
\begin{theorem}
    Suppose that $f \colon \mathbb{D} \rightarrow \mbR^d$ is a randomized function and $f(\cdot)$ follows multivariate Gaussian distribution $\mathcal{N}(\nu_d, \Sigma_d)$ with $\Vert \nu_d \Vert_2 \le 1$, $\forall d \in D$. For $D \in 2^\mathbb{D}$ with $|D|\ge \Bar{N}$, consider a randomized mechanism $\mathcal{M}$ defined by $\mathcal{M}(D) \coloneqq \sum_{i \in J}f(d_i)$, where $J \subset [N]$ is a random sample from $[N]$, where $N=|D|$. In the sampling, each $i\in [N]$ is chosen independently with probability $q$, but if the size of $J$ is smaller than $N_B$, it is resampled. Let $\boldsymbol{\lambda}(\cdot)$ denote the spectrum of the matrix. Assume there exist $\sigma \ge 1$ such that $\min \boldsymbol{\lambda}(\sum_{i \in J} \Sigma_{d_i}) \ge \sigma^2$, $\forall J \in 2^{[N]}$ and $|J| \ge N_B$. Then, if $1 \le N_B \le q\Bar{N}$, $q\le \frac15$, $\sigma\ge 4$, and $\alpha$ satisfy $1<\alpha \le {\textstyle\frac12}\sigma^2A-2\ln \sigma$ and $\alpha \le \frac{\frac12\sigma^2 A^2-\ln5-2\ln\sigma}{A+\ln (q\alpha)+1/(2\sigma^2)}$, where $A\coloneqq\ln\left(1+\frac1{q(\alpha-1)}\right)$, the mechanism $\mathcal{M}$ satisfies $(\alpha, \gamma)$-RDP for
    \begin{equation}
        \gamma = \frac{q\boldsymbol{p}(N_B-1; \Bar{N}, q)}{1-\boldsymbol{P}(N_B-1; \Bar{N}, q)} + \frac{2q^2 }{\sigma^2}\alpha,
    \end{equation}
    where $\boldsymbol{p}(\cdot, \Bar{N}, q)$ and $\boldsymbol{P}(\cdot, \Bar{N}, q)$ are defined as the probability mass function and cumulative distribution function of the binomial distribution with parameters $\Bar{N}$ and $q$, respectively.
\end{theorem}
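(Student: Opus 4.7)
The plan is to bound the Rényi divergence $\rdalpha{\mathcal{M}(D)}{\mathcal{M}(D')}$ for every pair of adjacent $D, D' \in \mathbb{D}$ and then invoke Definition~\ref{def:rdp}. First, I would reduce to the worst-case add-one adjacency $D' = D \cup \{d^\ast\}$ with $|D| = \Bar{N}$: because $\Pr[|J| < N_B]$ is strictly decreasing in $|D|$ at fixed $q, N_B$, the rejection-induced penalty is maximized at the minimum allowed size, explaining why the binomial in $\gamma$ uses parameter $\Bar{N}$ (the remove-one direction is symmetric). Then I would couple the Poisson samples on $D$ and $D'$ by drawing $J$ Poisson on $D$ at rate $q$ together with an independent $X \sim \text{Bern}(q)$, setting $J' = J \cup \{d^\ast\}$ if $X = 1$ and $J' = J$ otherwise. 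Writing $A_D \coloneqq \{|J| \ge N_B\}$ and $A_{D'} \coloneqq \{|J|+X \ge N_B\}$, the laws of $\mathcal{M}(D)$ and $\mathcal{M}(D')$ are the conditional laws of $\sum_{i\in J} f(d_i)$ given $A_D$ and of $\sum_{i\in J} f(d_i) + X f(d^\ast)$ given $A_{D'}$, respectively. A short case analysis yields the key identity
\[
\Pr[A_{D'}] = \Pr[A_D] + q\,\boldsymbol{p}(N_B - 1; \Bar{N}, q),
\]
so that $\Pr[A_{D'}]/\Pr[A_D] = 1 + \tau$ with $\tau \coloneqq q\,\boldsymbol{p}(N_B - 1; \Bar{N}, q)/(1-\boldsymbol{P}(N_B - 1; \Bar{N}, q))$, exactly the first summand of $\gamma$.

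I would next split $\rdalpha{\mathcal{M}(D)}{\mathcal{M}(D')}$ into a ``shape'' part (the divergence of the two unconditioned Poisson-sampled Gaussian mixtures) and a ``normalization'' part (corrections from conditioning on $A_D, A_{D'}$). For the shape part, I would invoke the spectral hypothesis $\min\boldsymbol{\lambda}(\sum_{i\in J}\Sigma_{d_i}) \ge \sigma^2$ to write, conditional on any accepted $J$, the output as $\mu_J + \sigma Z + \xi_J$, where $Z \sim \mathcal{N}(0, \mathbb{I})$ and $\xi_J \sim \mathcal{N}(0, \Sigma_J - \sigma^2 \mathbb{I})$ are independent given $J$; routing the analysis through the joint distribution of $(J, \mathrm{output})$ and applying the data-processing inequality for Rényi divergence reduces this to a standard Sampled Gaussian Mechanism with isotropic noise $\mathcal{N}(0, \sigma^2 \mathbb{I})$, sampling rate $q$, and $\ell_2$-sensitivity $\|\nu_{d^\ast}\|_2 \le 1$. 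Mironov's Theorem 11 then delivers the $2q^2\alpha/\sigma^2$ bound for the shape part under the stated constraints on $q, \sigma, \alpha$. For the normalization part, direct manipulation of the conditional densities produces a correction controlled by $\log(\Pr[A_{D'}]/\Pr[A_D]) = \log(1+\tau) \le \tau$. Summing the two contributions yields the claimed $\gamma$.

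The main obstacle, in my view, is the shape-part reduction: because the covariance of $\xi_J$ depends on $J$ (which is part of the secret), the naive ``add independent post-processing noise'' argument is insufficient, and one must route through the joint $(J, \mathrm{output})$ distribution and carefully apply data processing to show that this batch-dependent excess noise does not inflate the Rényi divergence beyond Mironov's isotropic SGM bound. A secondary subtlety is verifying that the decomposition into shape and normalization is genuinely additive inside the Rényi integral; this hinges on the coupling-based acceptance identity above so that the acceptance-probability ratio factors cleanly out and interacts with the Poisson-sampled Gaussian analysis through the inequality $\log(1+\tau)\le\tau$ rather than through a tangled multiplicative correction.
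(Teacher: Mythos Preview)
Your proposal is correct and takes essentially the same route as the paper: write out the mixture densities on $D$ and $D'=D\cup\{d_{N+1}\}$, drop the boundary term in $f_1$ to extract the acceptance-probability ratio $c_0/c_1=1+\tau$ (your ``normalization part,'' yielding exactly the $\ln(c_0/c_1)\le\tau$ contribution), reduce the remaining density ratio to an isotropic sampled Gaussian problem via the spectral lower bound $\min\boldsymbol{\lambda}\ge\sigma^2$, and apply Mironov's SGM bound for the $2q^2\alpha/\sigma^2$ term. The only noteworthy difference is in the covariance reduction you flag as the main obstacle: the paper does not route through a joint $(J,\text{output})$ data-processing argument but instead directly replaces every $\mu(\omega;0,\sum_{i\in J}\Sigma_{d_i})$ by $\mu(\omega;0,\sigma^2\mathbb{I})$ inside the ratio and then collapses to one dimension---so your proposed justification is in fact more careful than what the paper presents, but the overall decomposition and arithmetic are identical.
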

\begin{proof}
    Consider two  adjacent datasets $\mathcal{D}=\{d_i\}_1^N$ and $\mathcal{D}'=\{d_i\}_1^{N+1}$. We want to show that
    \begin{align}
        &\mbE_{\omega \sim f_0}[(\frac{f_0(\omega)}{f_1(\omega)})^\lambda] \le \gamma,\\
        \text{and}~ & \mbE_{\omega \sim f_1}[(\frac{f_1(\omega)}{f_0(\omega)})^\lambda] \le \gamma,
    \end{align}
    for some explicit $\gamma$ to be determined later, where $f_0$ and $f_1$ denote the probability density function of $\mathcal{M(\mathcal{D})}$ and $\mathcal{M(\mathcal{D}')}$, respectively. Here we focus on the former one $ \mbE_{\omega \sim f_0}[(\frac{f_0(\omega)}{f_1(\omega)})^\lambda]$. The other one is similar. By the design of mechanism $\mathcal{M}$, we have
    \begin{align}
        f_0(\omega) = c_0\sum_{J \in 2^{[N]}, |J| \ge L} q^{|J|}(1-q)^{N-|J|}\mu(\omega;\sum_{i\in J}\nu_{d_i}, \sum_{i\in J}\Sigma_{d_i}),
    \end{align}
    where $c_0$ is the normalizing constant and $\mu(\cdot;\nu, \Sigma)$ represents the probability density function of Gaussian distribution with mean $\nu$ and covariance matrix $\Sigma$. To simplify the expression, let us denote $\mu_{J}(\omega) \coloneqq \mu(\omega;\sum_{i\in J}\nu_{d_i}, \sum_{i\in J}\Sigma_{d_i})$ for any integer set $J$. Similarly, we have
    \begin{align*}
        f_1(\omega) &= c_1\sum_{J \in 2^{[N+1]}, |J| \ge L} q^{|J|}(1-q)^{N+1-|J|}\mu_{J}(\omega)\\
        &= c_1((\sum_{J \in 2^{[N]}, |J|=L-1} q^{L}(1-q)^{N+1-L}\mu_{J\cup\{N+1\}}(\omega)\\
        &\quad + \sum_{J \in 2^{[N]}, |J| \ge L} q^{|J|}(1-q)^{N-|J|}((1-q)\mu_{J}(\omega)+q\mu_{J\cup\{N+1\}}(\omega))\\
        &< c_1 \sum_{J \in 2^{[N]}, |J| \ge L} q^{|J|}(1-q)^{N-|J|}\left((1-q)\mu_{J}(\omega)+q\mu_{J\cup\{N+1\}}(\omega)\right) \\
        &\coloneqq \Bar{f}_1(w)
    \end{align*}
    where $c_1$ is the normalizing constant. Then, we have 
    \begin{align}
        \mbE_{\omega \sim f_0}[(\frac{f_0(\omega)}{f_1(\omega)})^\lambda] < \mbE_{\omega \sim f_0}[(\frac{f_0(\omega)}{\Bar{f}_1(\omega)})^\lambda] \le (\frac{c_0}{c_1})^\lambda \mbE_{\omega \sim f_0}[(\frac{f_0(\omega)}{((1-q)+ q \Gamma_{\nu_{d_{N+1}}})f_0(\omega)})^\lambda],
    \end{align}
    where $\Gamma$ is a translation operator defined as $\Gamma_\epsilon f(\omega) = f(\omega+\epsilon)$. Without loss of generality, $\Vert \nu_{d_{N+1}} \Vert_2 =1$ and $\nu_{d_i} =0$, $i \ne N+1$. Then, we have
    \begin{align*}
        & \quad~ \mbE_{\omega \sim f_0}[(\frac{f_0(\omega)}{f_1(\omega)})^\lambda] \\
        &\le (\frac{c_0}{c_1})^\lambda \mbE_{\omega \sim f_0}[(\frac{\sum_{J \in 2^{[N]}, |J| \ge L} q^{|J|}(1-q)^{N-|J|}\mu(\omega;0, \sum_{i\in J}\Sigma_{d_i})}{((1-q)+ q \Gamma_{\nu_{d_{N+1}}})\sum_{J \in 2^{[N]}, |J| \ge L} q^{|J|}(1-q)^{N-|J|}\mu(\omega;0, \sum_{i\in J}\Sigma_{d_i})})^\lambda] \\
        &\le (\frac{c_0}{c_1})^\lambda \mbE_{\omega \sim f_0}[(\frac{\sum_{J \in 2^{[N]}, |J| \ge L} q^{|J|}(1-q)^{N-|J|}\mu(\omega;0, \sigma^2 \mathbb{I})}{((1-q)+ q \Gamma_{\nu_{d_{N+1}}})\sum_{J \in 2^{[N]}, |J| \ge L} q^{|J|}(1-q)^{N-|J|}\mu(\omega;0, \sigma^2 \mathbb{I})})^\lambda]\\
        &= (\frac{c_0}{c_1})^\lambda \mbE_{\omega \sim f_0}[(\frac{\mu(\omega;0, \sigma^2 \mathbb{I})}{((1-q)+ q \Gamma_{\nu_{d_{N+1}}})\mu(\omega;0, \sigma^2 \mathbb{I})})^\lambda].
    \end{align*}
    Without loss of generality, $\nu_{d_{N+1}} = \boldsymbol{e}_1$. Then, in the above equation, the numerator distribution $\mu(\omega;0, \sigma^2 \mathbb{I})$ and denominator distribution $((1-q)+ q \Gamma_{\nu_{d_{N+1}}})\mu(\omega;0, \sigma^2 \mathbb{I})$ are identical except for the first coordinate and hence we have a one-dimensional problem. Specifically, we have
    \begin{align}
        \mbE_{\omega \sim f_0}[(\frac{f_0(\omega)}{f_1(\omega)})^\lambda] &\le (\frac{c_0}{c_1})^\lambda \mbE_{\omega \sim \mu_0}[(\frac{\mu_0}{((1-q)+ q \Gamma_{1})\mu_0})^\lambda],
    \end{align}
    where $\mu_0$ denotes the probability density function of $\mathcal{N}(0,\sigma^2)$. Notice that
    \begin{align}
        \mbE_{\omega \sim f_0}[(\frac{f_0(\omega)}{f_1(\omega)})^\lambda] = \mbE_{\omega \sim f_1}[(\frac{f_0(\omega)}{f_1(\omega)})^{\lambda+1}].
    \end{align}
    Then, we have
    \begin{align}
        \rdalpha{\mathcal{M}(D)}{\mathcal{M}(D')} &= \frac1{\alpha-1}\ln \mbE_{\omega\sim f_1} \left(\frac{f_0(\omega)}{f_1(\omega)}\right)^{\alpha} \\
        & \le \frac1{\alpha-1}\ln \left[(\frac{c_0}{c_1})^{\alpha-1} \mbE_{\omega \sim ((1-q)+ q \Gamma_{1})\mu_0}(\frac{\mu_0}{((1-q)+ q \Gamma_{1})\mu_0})^{\alpha}\right] \\
        &= \ln \frac{c_0}{c_1} + \rdalpha{\mu_0}{((1-q)+ q \Gamma_{1})\mu_0}
    \end{align}
    Using the existing result from \citet{mironov2019renyi}, we can derive 
    \begin{align}
        \rdalpha{\mathcal{M}(D)}{\mathcal{M}(D')} \le \ln \frac{c_0}{c_1} + \frac{2q^2 }{\sigma^2}\alpha,
    \end{align}
    when $q\le \frac15$, $\sigma\ge 4$, and $\alpha$ satisfy $1<\alpha \le {\textstyle\frac12}\sigma^2A-2\ln \sigma$ and $\alpha \le \frac{\frac12\sigma^2 A^2-\ln5-2\ln\sigma}{A+\ln (q\alpha)+1/(2\sigma^2)}$, where $A\coloneqq\ln\left(1+\frac1{q(\alpha-1)}\right)$.
    Particularly, we have
    \begin{align}
        \frac{c_0}{c_1} = 1 + c_0\tbinom{N}{L-1}q^L(1-q)^{N+1-L} = 1 + \frac{qp(N_B-1; N, q)}{1-P(N_B-1; N, q)}.
    \end{align}
    If $N_B \le q\Bar{N}$, we have
    \begin{align}
        \frac{c_0}{c_1} \le 1 + \frac{qp(N_B-1; \Bar{N}, q)}{1-P(N_B-1; \Bar{N}, q)}
    \end{align}
    Finally, we have, if $1 \le N_B \le q\Bar{N}$, $q\le \frac15$, $\sigma\ge 4$, and $\alpha$ satisfy $1<\alpha \le {\textstyle\frac12}\sigma^2A-2\ln \sigma$ and $\alpha \le \frac{\frac12\sigma^2 A^2-\ln5-2\ln\sigma}{A+\ln (q\alpha)+1/(2\sigma^2)}$, where $A\coloneqq\ln\left(1+\frac1{q(\alpha-1)}\right)$,
    \begin{align}
        \rdalpha{\mathcal{M}(D)}{\mathcal{M}(D')} \le \frac{qp(N_B-1; \Bar{N}, q)}{1-P(N_B-1; \Bar{N}, q)} + \frac{2q^2 }{\sigma^2}\alpha.
    \end{align}
\end{proof}

Then, directly using the composition theorem of RDP, we obtain that with certain conditions on parameters, the RDP bound of our DP-ULR is
\begin{equation}
    \gamma = \frac{Tq\boldsymbol{p}(N_B-1; \Bar{N}, q)}{1-\boldsymbol{P}(N_B-1; \Bar{N}, q)} + \frac{2Tq^2}{\sigma_0^2}\alpha.
\end{equation}

\section{More experiments}
\subsection{Different Model Sizes}
\label{apdx:diff_model_size_exp}

\begin{table}[t]
\centering
\resizebox{0.9\linewidth}{!}{
\begin{tabular}{c|c|c|c|c|c|c}
\toprule
{ }           & { }           & { Method}      & \multicolumn{2}{c|}{{ \textbf{DP-ULR}}}                                              & \multicolumn{2}{c}{{ \textbf{DP-SGD}}}                                               \\ \midrule
{ Batch size} & { $\sigma_0$} & { Model}       & \multicolumn{1}{c|}{{ Training acc.(\%)}} & { Valid acc. (\%)}   & \multicolumn{1}{c|}{{ Training acc.(\%)}} & { Valid acc. (\%)}   \\ \midrule
{ }           & { }           & { MLP(small)}  & \multicolumn{1}{c|}{{ 57.88$_{\pm4.53}$}} & { 58.65$_{\pm6.32}$} & \multicolumn{1}{c|}{{ 33.63$_{\pm0.65}$}} & { 32.07$_{\pm4.17}$} \\
{ 64}         & { 8}          & { MLP(medium)} & \multicolumn{1}{c|}{{ 58.58$_{\pm4.69}$}} & { 59.53$_{\pm4.38}$} & \multicolumn{1}{c|}{{ 42.98$_{\pm5.83}$}} & { 43.81$_{\pm6.30}$} \\
{ }           & { }           & { MLP(large)}  & \multicolumn{1}{c|}{{ 40.10$_{\pm5.87}$}} & { 41.42$_{\pm7.03}$} & \multicolumn{1}{c|}{{ 28.75$_{\pm3.90}$}} & { 24.78$_{\pm6.28}$} \\ \midrule
{ }           & { }           & { MLP(small)}  & \multicolumn{1}{c|}{{ 73.04$_{\pm1.37}$}} & { 74.24$_{\pm1.19}$} & \multicolumn{1}{c|}{{ 69.49$_{\pm0.71}$}} & { 70.53$_{\pm0.57}$} \\
{ 128}        & { 8}          & { MLP(medium)} & \multicolumn{1}{c|}{{ 67.25$_{\pm0.91}$}} & { 69.00$_{\pm1.39}$} & \multicolumn{1}{c|}{{ 66.22$_{\pm1.61}$}} & { 67.98$_{\pm1.81}$} \\
{ }           & { }           & { MLP(large)}  & \multicolumn{1}{c|}{{ 66.48$_{\pm2.13}$}} & { 67.75$_{\pm3.00}$} & \multicolumn{1}{c|}{{ 65.87$_{\pm0.81}$}} & { 66.15$_{\pm1.49}$} \\ \midrule
{ }           & { }           & { MLP(small)}  & \multicolumn{1}{c|}{{ 79.27$_{\pm0.93}$}} & { 80.94$_{\pm1.12}$} & \multicolumn{1}{c|}{{ 85.11$_{\pm0.34}$}} & { 86.03$_{\pm0.46}$} \\
{ 256}        & { 8}          & { MLP(medium)} & \multicolumn{1}{c|}{{ 76.45$_{\pm0.42}$}} & { 78.48$_{\pm0.43}$} & \multicolumn{1}{c|}{{ 83.93$_{\pm0.52}$}} & { 84.96$_{\pm0.68}$} \\
{ }           & { }           & { MLP(large)}  & \multicolumn{1}{c|}{{ 76.25$_{\pm0.34}$}} & { 78.28$_{\pm0.80}$} & \multicolumn{1}{c|}{{ 84.29$_{\pm0.52}$}} & { 85.13$_{\pm0.89}$} \\ \bottomrule
\end{tabular}
}
\caption{The classification accuracy of MLP of different sizes on the MNIST dataset.}
\label{tab:diff_model_size_exp}
\vspace{-6mm}
\end{table}

We conduct ablation experiments to analyze the relationship between noise-redundancy impairment and model size by evaluating three configurations of MLP models: small, medium, and large. The parameter count for MLP (medium) and MLP (large) is approximately 2.5 and 5 times that of MLP (small), respectively. We test both DP-ULR and DP-SGD under a high noise scale (target std level) of $\sigma_0=8$ with batch sizes $B=64, 128, 256$. All other hyperparameters remain consistent with those specified in the \Cref{sec:eval_MLP}. Each experiment is repeated five times with different random seeds, and the mean and standard deviations are reported in \Cref{tab:diff_model_size_exp}.

The results indicate that with smaller batch sizes, the performance advantage of DP-ULR over DP-SGD diminishes as model size increases. This trend may be attributed to noise redundancy, stemming from two factors: (1) DP-ULR's privacy cost is influenced by the smallest singular value of the Jacobian matrix, and (2) the non-isotropic variance of our gradient proxy, which tends to grow with model size. However, this phenomenon is mitigated as batch size increases. For batch sizes of 128 and 256, the performance gap between DP-ULR and DP-SGD remains consistent regardless of model size. This stabilization is likely due to the increased sample diversity with larger batches, which reduces the non-isotropy of the variance and minimizes its impact on training performance.

\section{Limitations and Future Work} \label{apdx:lmt_fut_wrk}
We proposed an intuitive and direct adaptation (DP-ULR) of a forward-learning approach (ULR) that diverges from traditional SGD by eschewing backpropagation. Our analysis in this work primarily compares DP-ULR with the canonical form of DP-SGD. We acknowledge that recent advancements that incrementally improve the \textit{privacy-utility trade-off} in DP-SGD could potentially be generalized to our forward-learning context; however, such extensions are beyond the scope of our initial investigation and represent promising avenues for future research.

Although DP-ULR retains the same benefits as ULR due to the unchanged core mechanics, we did not explore its suitability for non-differential or black-box settings in our experiments. Additionally, we did not implement parallelization or optimize the training pipeline for efficiency. In its current implementation, DP-ULR takes 23 seconds per epoch on an A6000 GPU with the MNIST dataset, which is slower than DP-SGD, which takes 18 seconds per epoch. Due to the large gradient estimation variance, the scale-up of ULR usually requires a large number of copies, which poses a great challenge to the computation and memory cost.  Future works might focus on the development of advanced techniques to improve computational efficiency and reduce the estimation variance.

Differential privacy aims to ensure data privacy through randomness. When used to train a deep learning model, such randomness impairs the model's performance. Our algorithm has the same limitation. Besides, during training with DP-ULR, we observed overfitting, where the model achieved high accuracy (around 90\% on MNIST) but exhibited extreme losses: near-zero loss for some samples while having very high losses for others that it failed to classify. Addressing this overfitting issue is another area for potential future exploration.


\end{document}